\documentclass[10pt,a4paper]{article}
\usepackage[textwidth=171mm,textheight=256mm,centering]{geometry}
\usepackage{graphicx,tikz,amsmath,amssymb,amsfonts,bm,booktabs}
\usetikzlibrary{arrows.meta,calc,shapes.geometric}
\definecolor{purple}{HTML}{6633CC}
\definecolor{teal}{HTML}{008E99}
\definecolor{goal}{HTML}{C61D66}
\usepackage{url,capt-of,ragged2e,longtable,multirow,float,placeins,needspace}
\usepackage{algorithm,algorithmic,amsthm,enumitem}
\usepackage{natbib}
\setcitestyle{authoryear,round,citesep={;},aysep={,},yysep={;}}
\usepackage{hyperref}
\usepackage{microtype}
\usepackage[font=small,labelfont=bf]{caption}
\newtheorem{assumption}{Assumption}
\newtheorem{proposition}{Proposition}
\newtheorem{lemma}{Lemma}

\theoremstyle{definition}

\theoremstyle{remark}

\ifdefined\pdfminorversion\pdfminorversion=7\fi

\title{Planning under Noise with Neural Operators}
\author{Michael Chen\textsuperscript{1,*}, Hong Chul Nam\textsuperscript{2,*}, Xi Deng\textsuperscript{1,*}, Anima Anandkumar\textsuperscript{1}\\[0.5em]
{\small \textsuperscript{1}Computing and Mathematical Sciences, California Institute of Technology}\\[0.25em]
{\small \textsuperscript{2}Center for Data Science, New York University}\\[0.25em]
{\small \textsuperscript{*}Equal contribution}}\date{}
\begin{document}

\maketitle

\begin{abstract}
Planning under uncertain control execution noise requires reliable feedback policies across diverse environments. We introduce \textbf{Planning Under Noise (PUN)}, a neural operator that learns noise-aware value fields for planning in unseen geometries. Building on an established screened-Poisson formulation of stochastic optimal control, we address two challenges: specifying boundary conditions that support navigation across varying geometries and efficiently computing fields with reliable gradients. Zero-Dirichlet obstacle boundaries terminate sampled paths at walls, weakening the goal signal along long routes and making it harder to estimate reliably. We instead impose reflecting Neumann conditions on obstacles and a Dirichlet condition at the goal, allowing sampled paths to continue exploring free space until they reach the target. To solve the resulting PDE without a configuration-space mesh, we extend a Monte Carlo (MC) solver, \textbf{Walk on Stars (WoSt)}, to higher dimensions. However, local MC fluctuations can reverse estimated gradient directions and cause planning failures, making direct planning require costly sampling. PUN amortizes this computation by learning geometry-to-field mappings from noisy, partially converged Monte Carlo labels, avoiding expensive convergence for every training example. At inference, test-time scaling uses bounded local search guided by the predicted global field. We evaluate PUN on planning tasks across dimensions. Across four noise levels, PUN matches or outperforms all compared planners, achieving 95--100\% success on the 4D replanning benchmark while maintaining the lowest runtime. At the highest tested noise level in 4D ($\lambda=4$), PUN achieves 10.7--60.6\% relative gains in success rate over other methods while running 3.29--32.15$\times$ faster.
\end{abstract}

\section{Introduction}

Planning~\citep{latombe1991robot} is one of the most fundamental problems in robotics. Given a robot's configuration space~\citep{lavalle2006planning}, planning computes a path from the current configuration to a goal that is optimal under a given cost function. Real robots do not execute a commanded motion exactly~\citep{thrun2002probabilistic}. These errors accumulate along a motion, so a route planned with a few centimetres of clearance can be executed into a wall. Sampling-based planners such as PRM~\citep{kavraki1996prm} and RRT~\citep{lavalle2001rrt}, and graph search such as A$^\star$~\citep{hart1968astar}, plan for a nominal system that executes commands exactly and leave the noise to a tracking controller. Sampling-based methods such as MPPI~\citep{williams2017mppi} and PI$^2$-CMA~\citep{stulp2012pi2cma} optimize control sequences or policy parameters using sampled trajectory costs. Optimal feedback under execution noise can be derived from a value function, which gives the minimum expected cost of reaching the goal from each configuration and satisfies the nonlinear Hamilton--Jacobi--Bellman (HJB) equation~\citep{fleming2006controlled}. Solving this equation on a grid scales exponentially with the dimension of the configuration space~\citep{bellman1957dynamic,han2018solving}.

While the problem can be converted into a linear screened Poisson equation~\citep{theodorou2010generalized} under control--noise matching (Assumption~\ref{ass:matching}), a central difficulty in Poisson-based planning is specifying boundary conditions that produce a reliable value field across varying geometries.

\begin{figure}[t]
\centering
\includegraphics[width=1.0\linewidth]{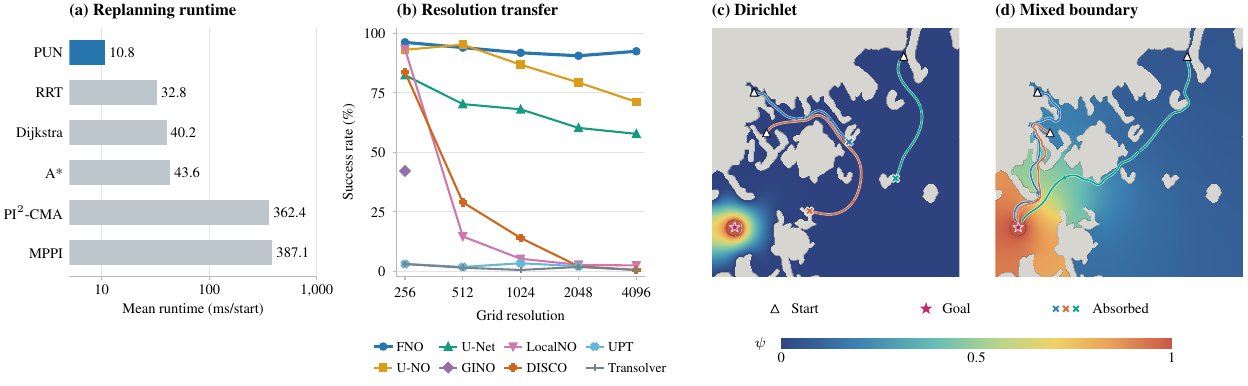}
\caption{\textbf{(a) Fast replanning.} PUN has the lowest mean runtime on unseen 4D maps: 10.8 ms/start, compared with 32.8--387.1 ms/start for the baselines. Runtimes are averaged equally over the four execution-noise levels in Table~\ref{tab:replanning} and plotted on a logarithmic axis. \textbf{(b) Resolution transfer.} FNO maintains over 90\% local-search success across all five resolutions without retraining, reaching 92.50\% at $4096\times4096$ after training at $256\times256$. Missing points indicate out-of-memory failures. \textbf{(c, d) Continued goal seeking.} In the illustrated geometry, the solution induced by the Dirichlet boundary cannot propagate goal information further since most trajectories are absorbed by obstacles. Consequently, it is more difficult for planners to find the target than with the mixed boundary condition.}
\label{fig:main-collage}
\end{figure}

Conventional mesh-based numerical methods can solve the screened Poisson equation efficiently in two and three dimensions. However, a robot with \(d\) degrees of freedom (DoF) induces a \(d\)-dimensional configuration space, making volumetric discretizations and numerical PDE solution increasingly expensive. Planning must also accommodate complex and previously unseen obstacle geometries, for which constructing a new mesh and solving the PDE from scratch can be costly. Mesh-free Monte Carlo methods~\citep{muller1956some,sawhney2020wos,sawhney2023wost,jambon2026walk} support PDE solution on complex geometries, motivating their use for planning in higher-dimensional configuration spaces.

Among these, Walk-on-Spheres (WoS)~\citep{muller1956some} estimates the solution at a point by jumping repeatedly to a uniformly sampled point on the largest ball fitting in free space, until it reaches a boundary of known value, which requires every reachable boundary to be Dirichlet. Walk-on-Stars (WoSt)~\citep{sawhney2023wost} accommodates reflecting obstacles by replacing that ball with a star-shaped region that may contain part of the obstacle boundary, keeping every boundary point visible from the center so that one ray continues the walk instead of branching at each contact. The screened kernels for Walk-on-Stars have been derived only in two and three dimensions~\citep{sawhney2023wost}. Moreover, directly planning with Monte Carlo estimates is unreliable: trajectories depend on local value gradients, and even modest estimator noise can corrupt or reverse the estimated direction, leading to planning failure.

Neural operators~\citep{kovachki2023neural} can amortize expensive PDE solves by learning mappings from input geometries directly to solution fields. Moreover, prior work has shown that fully converged labels are unnecessary when training from Monte Carlo simulations: neural operators can recover accurate predictions from noisy, partially converged estimates~\citep{viswanath2026operator}. This enables training of neural operators with inexpensive Monte Carlo supervision while providing fast, smooth value-field predictions at inference time.

We propose \textbf{P}lanning \textbf{U}nder \textbf{N}oise (PUN), shown in Fig.~\ref{fig:title-pipeline}, an operator-learning framework that maps environment geometry to value fields that account for control noise. Boundary conditions can be understood through a heat analogy. With unit desirability at the goal and zero desirability on obstacle boundaries, the goal acts like a hot source, while the obstacle walls act like cold sinks. The zero-Dirichlet condition fixes the field to zero along each obstacle surface, making the entire obstacle boundary a heat sink. In a long or narrow passage, heat spreading from the goal therefore leaks into the walls along the route. The farther the goal's influence must travel, the more it is attenuated, leaving a weak signal in distant parts of free space that discretization, sampling, or learning errors can obscure. Narrow passages make this particularly pronounced because the absorbing walls remain close on both sides. This attenuation is visible in Fig.~\ref{fig:main-collage}(c), where the weak goal signal provides insufficient directional guidance for starting points far from the goal, making it difficult to navigate toward the target. Moreover, the screened Poisson equation already includes decay throughout free space through its screening term; absorbing obstacle boundaries introduce an additional source of attenuation. We therefore keep a Dirichlet condition at the goal but impose homogeneous Neumann (zero-flux) conditions on obstacle boundaries. The walls now act like insulation: they prevent heat from escaping through the boundary without forcing the desirability itself to vanish there. Heat can spread along a corridor without continually draining into its walls, allowing the goal's influence to reach farther into free space. Screening still causes decay, but the additional loss through obstacles is removed. This is precisely the effect illustrated in Fig.~\ref{fig:main-collage}(d): compared with the absorbing Dirichlet field in Fig.~\ref{fig:main-collage}(c), the mixed-boundary field extends the goal's influence much farther through the environment, preserving a stronger signal for planning along long routes. The Neumann condition also makes the value gradient parallel to smooth obstacle boundaries; therefore, gradient-based planning directs motion along walls rather than into them. This permits guidance along walls in narrow passages without the strong repulsion induced by zero-Dirichlet conditions. Under our regularity assumptions, the value field and control remain bounded at obstacles, whereas zero-Dirichlet desirability makes both diverge there (Appendix~\ref{app:proofs:reflection-choice}).

We first derive dimension-general Poisson kernels and Monte Carlo weights, extending mesh-free supervision beyond three dimensions for scalability. This enables supervision generation without a volumetric configuration-space mesh. We then train PUN using noisy, partially converged Monte Carlo estimates in the weak supervision paradigm. By amortizing the PDE solution across geometries, PUN produces spatially coherent value fields that guide gradient following and local search without requiring a fully converged Monte Carlo solve for every environment. We evaluate PUN across 2D--4D environments and compare its planning success and runtime with nominal planners and stochastic controllers under execution noise. In the experiments varying the execution noise level $\lambda$, we plan a path from the current state at each control step, follow it briefly, and repeat from the updated state; we refer to this as replanning. Our contributions can be summarized as follows:

\begin{itemize}[leftmargin=10pt]
    \item We formulate Poisson-based path planning as a mixed boundary-value problem, imposing a Dirichlet condition at the goal and homogeneous Neumann conditions on obstacle boundaries.

\item We introduce an operator-learning framework that maps environment geometry to noise-aware value fields, enabling zero-shot planning in unseen complex geometries without per-instance retraining or numerical PDE solves.

\item We generalize the mesh-free Monte Carlo solver WoSt beyond three dimensions by deriving the dimension-dependent Green's function and step weights for the screened Poisson equation.

\item We introduce a test-time scaling strategy that evaluates reachable candidate states on a local lattice using a fixed global learned field, without retraining, improving planning success over direct gradient following in the reported 2D--4D evaluations (Tables~\ref{tab:main-planning} and~\ref{tab:controller-dimensions}).

\end{itemize}

\begin{figure}[t]
\centering
\resizebox{1.0\linewidth}{!}{%
\input{figures/fig1}%
    }
\caption{Planning Under Noise: \textbf{(A) Planning:} a neural operator maps geometry, goal, cost, and noise inputs to a desirability field. Grid-based local search selects feasible routes with high endpoint desirability, executes a short prefix, and replans. \textbf{(B) Training:} Walk-on-Stars (WoSt) estimates provide pointwise supervision; neural caching supplies learned continuation at the walk horizon.}
\label{fig:title-pipeline}
\end{figure}
On 2D application benchmarks, zero-shot WoSt-supervised FNO reaches 93.62\% local-search success, up from 83.87\% with gradient following on the same fields (Table~\ref{tab:main-planning}). With the same local-search, FNO-based PUN achieves approximately $941\times$ and $58\times$ speedups over direct WoSt on the 2D zero-shot application benchmarks and 4D manipulator test set, respectively (Table~\ref{tab:infer-compact}). In 4D replanning, PUN achieves 95.00--100\% success at 10.51--11.66 ms/start across the tested noise levels (Table~\ref{tab:replanning}; Fig.~\ref{fig:main-collage}(a)). At $\lambda=4$, PUN is $3.29\times$ faster than RRT and $5.09$--$32.15\times$ faster than A$^\star$, Dijkstra, MPPI, and PI$^2$-CMA. FNO also maintains over 90\% local-search success across resolutions up to $4096\times4096$ ($256\times$ as many grid points as used in training) without retraining (Fig.~\ref{fig:main-collage}(b)).
\section{Preliminaries}
\label{sec:prelim}
In this section we derive the elliptic boundary value problem that our operator learns, and fix the notation used throughout.
 
\subsection{Stochastic Formulation of Path Planning}
\label{sec:prelim:stochastic}

\begin{figure}[t]
\begin{minipage}[t]{0.45\textwidth}
\begin{algorithm}[H]
\caption{\textsc{Walk}$(\xi,\Psi_{\theta^-})$, constant $\kappa$, max step $K$}
\label{alg:wost}
\footnotesize
\setlength{\baselineskip}{0.95\baselineskip}
\begin{algorithmic}[1]
\STATE {\bfseries In:} start point $\xi$; EMA target $\Psi_{\theta^-}$
\STATE $w\leftarrow1$
\FOR{$j=1$ {\bfseries to} $K$}
\STATE \textbf{break if} $\xi\in\Gamma_D$
\STATE $r\leftarrow r(\xi)$ \hfill $\triangleright$ Eq.~\ref{eq:radius}
\STATE Sample $(\mathbf v,\rho)$ as in Section~\ref{sec:method:step}
\STATE $w\leftarrow w\,Q_d^\kappa(\rho,r)$; $\xi\leftarrow\xi+\rho\mathbf v$
\ENDFOR
\IF{$\xi\in\Gamma_D$}
\STATE \textbf{return} $w\,g(\xi)$
\ENDIF
\STATE \textbf{return} $w\,\mathrm{sg}[\Psi_{\theta^-}(\xi)]$
\end{algorithmic}
\end{algorithm}
\end{minipage}\hfill
\begin{minipage}[t]{0.52\textwidth}
\begin{algorithm}[H]
\caption{Training with $m$ queries and $N\geq2$ walks per query}
\label{alg:train}
\footnotesize
\setlength{\baselineskip}{0.95\baselineskip}
\begin{algorithmic}[1]
\STATE {\bfseries In:} environments $\mathcal E$; $m,N,K$, loss weights, learning and EMA rates
\STATE Initialize $\theta$ and $\theta^-\leftarrow\theta$
\REPEAT
\STATE Sample $a\sim\mathcal E$, queries $x^i$, and goal points
\STATE Evaluate $\mathcal T_\theta[a]$ and detached $\mathcal T_{\theta^-}[a]$
\STATE $\hat{\Psi}^{i,n}\leftarrow\textsc{Walk}(\xi^i,\Psi_{\theta^-})$, independently
\STATE $\widehat\Psi_i\leftarrow N^{-1}\sum_n \hat{\Psi}^{i,n}$ \hfill $\triangleright$ Eq.~\ref{eq:truncated}
\STATE $\theta\leftarrow\theta-\alpha_{\mathrm{lr}}\nabla_\theta\mathfrak L(\theta)$
\STATE $\theta^-\leftarrow(1-\alpha_{\mathrm{ema}})\theta^-+\alpha_{\mathrm{ema}}\theta$
\UNTIL{training ends}
\STATE {\bfseries Inference:} control from Eq.~\ref{eq:param}
\end{algorithmic}
\end{algorithm}
\end{minipage}
\end{figure}

\paragraph{Control problem.}
Let $\Omega\subset\mathbb R^d$ and let $\xi\in\overline\Omega$ be the starting point. The controlled process $X^{\xi,\mathbf u}=(X_t^{\xi,\mathbf u})_{t\geq0}$ has interior dynamics
\begin{equation}
\mathrm{d}X^{\xi,\mathbf{u}}_t = \big(\mathbf{b}(X^{\xi,\mathbf{u}}_t) + G(X^{\xi,\mathbf{u}}_t)\,\mathbf{u}_t\big)\,\mathrm{d}t
+ \Sigma^{1/2}(X^{\xi,\mathbf{u}}_t)\,\mathrm{d}\mathbf{W}_t,
\qquad X^{\xi,\mathbf{u}}_0 = \xi,
\label{eq:sde}
\end{equation}
with control $\mathbf u_t\in\mathbb R^k$ and standard Brownian motion $\mathbf W$; the process reflects normally at obstacles and workspace boundaries ($\Gamma_N$) and stops at the goal ($\Gamma_D$). We minimize
\begin{equation}
J(x,\mathbf{u}) = \mathbb{E}\left[\int_0^{\tau_D^{\mathbf u}} \Big( q(X^{\xi,\mathbf{u}}_t) + \tfrac{1}{2}\mathbf{u}_t^\top R\,\mathbf{u}_t \Big) \mathrm{d}t \;+\; \phi(X^{\xi,\mathbf{u}}_{\tau_D^{\mathbf u}})  \middle|\xi=x\right],
\label{eq:cost}
\end{equation}
where $\tau_D^{\mathbf u}:=\inf\{t\geq0:X_t^{\xi,\mathbf u}\in\Gamma_D\}$ is the first goal hit, $q\geq0$, $R\succ0$, and $\phi\geq0$ is the terminal cost. Write $g=e^{-\phi/\lambda}$ for the Dirichlet desirability; operator learning fixes $\phi\equiv0$, hence $g\equiv1$, while the PDE identities retain general positive $g$. Where the value $V(x)=\inf_{\mathbf u}J(x,\mathbf u)$ is $C^2$ it satisfies the stationary HJB equation, which is nonlinear: eliminating the optimal control leaves a term quadratic in $\nabla V$ (Appendix~\ref{app:proofs:colehopf}, Eq.~\ref{eq:hjb-sub}).
 
\begin{assumption}[Control--noise matching]
\label{ass:matching}
There exists $\lambda > 0$ with $\Sigma(x) = \lambda\, G(x) R^{-1} G(x)^\top$ for all $x \in \Omega$.
\end{assumption}
This is the standard condition of linearly solvable stochastic optimal control~\citep{kappen2005,todorov2007linearly,todorov2009,theodorou2010generalized}: a direction that is cheap to actuate is also one along which the system is uncertain, and $\lambda$ sets the exchange rate between the two. This matching condition enables the exact linearization of the HJB equation used here. The Cole--Hopf change of variable $\Psi(x)\triangleq\exp(-V(x)/\lambda)$~\citep{hopf1950,cole1951,fleming1977exit} cancels the quadratic term and turns the HJB equation into a linear elliptic equation for $\Psi$, as shown in Appendix~\ref{app:proofs:colehopf}.
 
\paragraph{Path-planning specialization.}
For a fully actuated, passive system we take $\mathbf{b} = 0$, $G = I$, and $R = I$, so Assumption~\ref{ass:matching} forces $\Sigma = \lambda I$. Writing the state cost as $q(x) = \tfrac{1}{2}c(x)^2$ with $c > 0$, the linearized equation becomes the screened Poisson equation:
\begin{equation}
\Delta \Psi - \kappa^2(x)\,\Psi = 0,
\qquad
\kappa(x) \triangleq \frac{c(x)}{\lambda}.
\label{eq:screened}
\end{equation}
Throughout, we take $c$ constant within each environment, so $\kappa$ is constant.
 
\paragraph{Boundary conditions.}
Let $D$ be the workspace, $\mathcal S$ the obstacle interiors, and $\mathcal G$ an open goal region. The free domain is $\Omega=D\setminus(\overline{\mathcal S}\cup\overline{\mathcal G})$, assumed connected, with boundary partitioned into $\Gamma_D=\partial\mathcal G$ and $\Gamma_N=\partial\mathcal S\cup\partial D$. We impose
\begin{equation}
\Psi = g \ \text{ on } \Gamma_D \qquad\text{(absorbing goal)},
\qquad
\partial_n \Psi = 0 \ \text{ on } \Gamma_N \qquad\text{(reflecting)} .
\label{eq:bc}
\end{equation}
The goal is the only Dirichlet set; obstacles and the outer domain boundary are reflecting in the PDE model. Under Assumption~\ref{ass:reg} the mixed problem has a unique solution with $\Psi\in(0,1]$ and $\inf_{\overline\Omega}\Psi>0$, so $\log\Psi$ is bounded and the log parameterization is well defined (Proposition~\ref{prop:app-positivity}). See Appendix~\ref{app:proofs:reflection-choice} for a detailed analysis.

\subsection{Neural Operator}
\label{sec:prelim:no}
 
A neural operator learns a map from problem data to a solution field~\citep{li2021fno}. Here we learn the desirability operator
\begin{equation}
\mathcal{T} : \big(\ \underbrace{\mathbf{1}_{\mathcal{S}}}_{\text{obstacles}},\ \underbrace{\mathbf{1}_{\mathcal{G}}}_{\text{goal}},\ \underbrace{c}_{\text{cost}},\ \underbrace{\lambda}_{\text{noise}}\big) \;\longmapsto\; \Psi,
\label{eq:operator}
\end{equation}
Both geometric inputs are masks on the workspace: $\mathbf1_{\mathcal S}$ is the occupancy map of the obstacles and $\mathbf1_{\mathcal G}$ marks the goal region; goal data $g\equiv1$ is fixed. A single trained model $\mathcal T_\theta$ predicts the field for new environments. Its spatial gradients provide control through Eq.~\ref{eq:param}, and its values support local search (Section~\ref{sec:method:local-search}). 
\section{Related Works}
\label{sec:related}
\begin{figure}[t]
\centering
\includegraphics[width=1.0\linewidth]{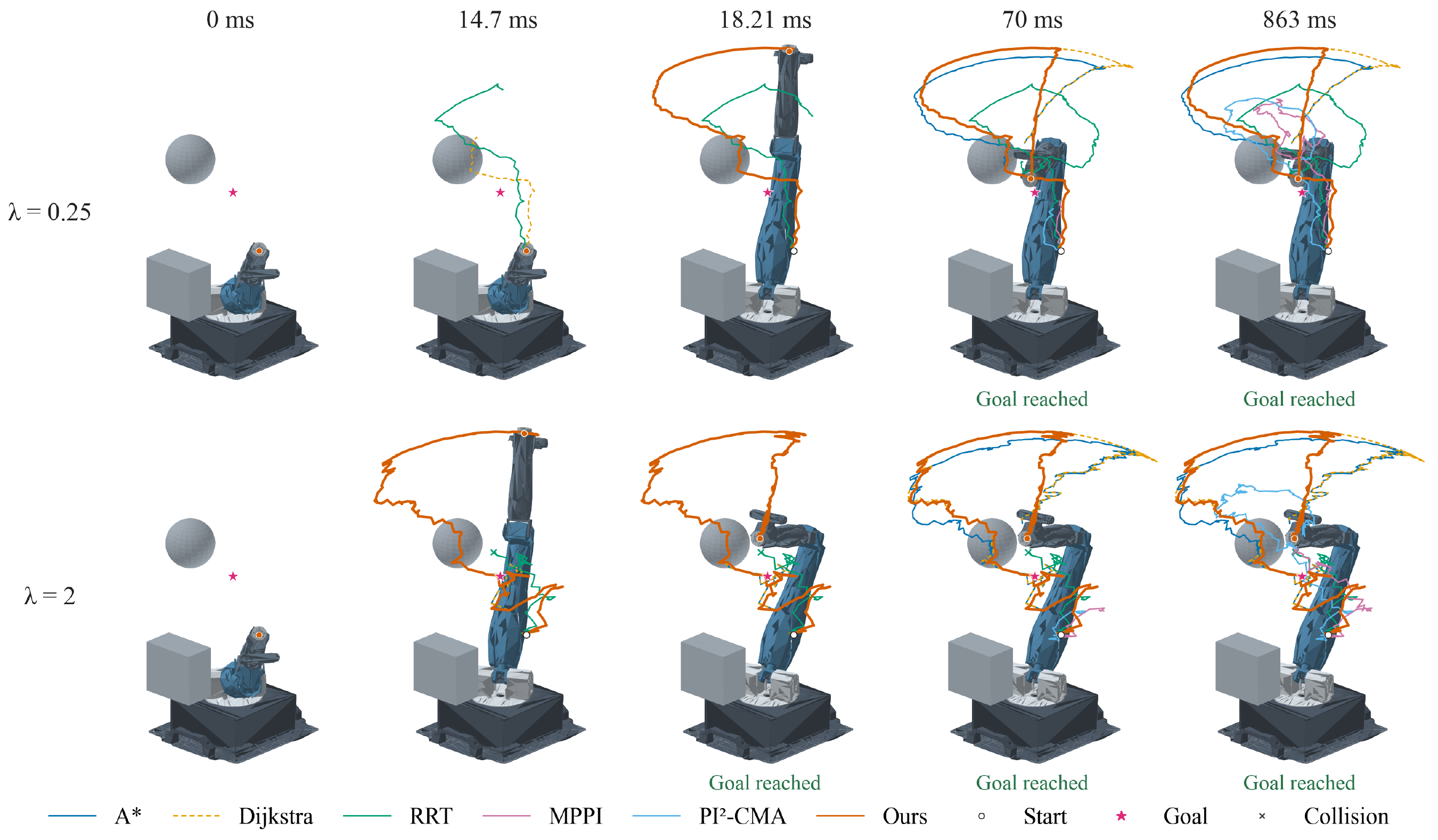}
\caption{Replanning trajectories on the same 4D manipulator, start, and goal at $\lambda=0.25$ (top) and $\lambda=2$ (bottom). Columns show successive execution states; PUN's trajectory is orange. PUN reaches the goal in both illustrated rollouts. Aggregate success and runtime are reported in Table~\ref{tab:replanning}.}
\label{fig:replanning-movingai-overlay}
\end{figure}

\paragraph{Path Planning.} Classical planners divide into two families. Sampling-based methods such as PRM~\citep{kavraki1996prm}, RRT~\citep{lavalle2001rrt}, and RRT$^\star$~\citep{karaman2011rrtstar} build a graph of collision-free motions, while search-based methods such as A$^\star$~\citep{hart1968astar}, Dijkstra~\citep{dijkstra1959}, and fast marching~\citep{sethian1996fastmarching} compute a cost-to-go over a discretization; both plan for a deterministic system and defer execution noise to tracking and replanning. Sampling-based methods such as MPPI~\citep{williams2017mppi,williams2017information} and PI$^2$-CMA~\citep{stulp2012pi2cma} optimize control sequences or policy parameters using sampled trajectory costs. Our implementations and their treatment of execution noise are described in Appendix~\ref{app:replanning-protocol}. We compare against these planners in Table~\ref{tab:replanning}. Planning Neural Operator (PNO)~\citep{matada2025pno} learns the solution operator of the Eikonal equation from labels produced by a deterministic solver, but it is limited to deterministic systems. NTFields~\citep{ni2023ntfields} and EikoNet~\citep{smith2020eikonet} train a physics-informed network on the Eikonal residual, which fits a single geometry per model and must be retrained for each new environment. In contrast, PUN is trained over a family of geometry classes with approximate Monte Carlo supervision across diverse noise ranges to support planning with different noise spectra while achieving reliable stability.
 
\paragraph{Monte Carlo Methods.}
Monte Carlo estimators for PDEs can be categorized depending on the type of PDE, include Walk-on-Spheres~\citep{sawhney2020wos}, Walk-on-Boundaries~\citep{sugimoto2023wob}, and Backward Stochastic Differential Equations (BSDE)~\citep{zhang2017backward}. Recent works combine Monte Carlo methods with neural networks to further reduce variance~\citep{nam2024nwos,li2023neural}, which train a network by regressing on noisy Monte Carlo targets and thereby amortize repeated pointwise Monte Carlo estimation into a forward pass. Two of their ingredients we adopt directly: a deterministic cap on the number of walk steps, and evaluation of the partially trained model when a walk exceeds that cap. The diffusion loss~\citep{nusken2021diffusion} truncates an SDE simulation and supplements it with a boundary physics loss.

\paragraph{Neural Operator.} Neural operators parameterize maps between function spaces and can be evaluated on different discretizations, with numerical accuracy depending on the representation~\citep{kovachki2023neural}. The architectures we compare differ in how they propagate information: FNO~\citep{li2021fno}, UNO~\citep{rahman2023uno}, LocalNO~\citep{liuschiaffini2024localno}, DISCO~\citep{ocampo2023disco}, GINO~\citep{li2023gino}, UPT~\citep{alkin2024upt} and Transolver~\citep{wu2024transolver}. We include U-Net~\citep{ronneberger2015unet} as a fixed-resolution reference. Physics-informed operators such as PINO~\citep{li2024pino} add a PDE residual to the objective; we use walk labels as the primary signal with regularization on gradient terms.

\section{Planning under Noise (PUN) with Neural Operator}
\label{sec:method}

PUN learns a desirability field from Walk-on-Stars supervision and uses the predicted field to guide local search. Labels are constructed directly in $\Psi$-space using finite-horizon walks, with learned continuation for unfinished trajectories. The neural operator is trained online on these labels. Algorithms~\ref{alg:wost} and~\ref{alg:train} describe a single walk and the training loop, respectively; walks and terminal evaluations are batched.

\subsection{The Walk-on-Stars Chain}
\label{sec:method:step}

\paragraph{Geometry and stopping.}
Let $\sigma=\inf\{k:\xi_k\in\Gamma_D\}$ be the first exact goal-boundary hit, possibly infinite. The ideal chain stops at an exact hit, where the terminal value is $g(\xi_\sigma)$; practical walks stop in a goal shell or at a finite horizon. For a spatial location $x$, the star radius is
\begin{equation}
\begin{gathered}
r(x)=\min\{d_D(x),\,d_{\mathrm{sil}}(x)\},\\
\mathrm{St}(x)=\text{the adjacent component of }B(x,r(x))\cap\Omega .
\end{gathered}
\label{eq:radius}
\end{equation}
Here $d_{\mathrm{sil}}$ is the distance to the closest visibility silhouette of $\Gamma_N$. An \emph{admissible star step} additionally requires that this component be star-shaped about $x$ and that its boundary consist only of the sphere and $\Gamma_N$. Appendix~\ref{app:wost-implementation} describes the shell stopping rule and radius floor used in place of exact absorption.

Write $\xi_k$ for the WoSt position after $k$ steps, with $\xi_0=\xi$ the starting point and $\xi_k$ random for $k\geq1$. Given $\xi_k$, construct $\mathrm{St}(\xi_k)$ with radius $r_k=r(\xi_k)$. Draw a direction uniformly on the sphere in the interior or on the inward hemisphere at a smooth reflecting boundary point, then move to the first positive hit of the sphere or an obstacle/domain boundary:
\begin{equation}
\mathbf v_k\sim\mathrm{Unif}(\mathbb S^{d-1}),\qquad
\rho_k=\text{first-hit distance},\qquad
\xi_{k+1}=\xi_k+\rho_k\mathbf v_k.
\label{eq:chain}
\end{equation}
The hemisphere restriction replaces the first distribution at $\xi_k\in\Gamma_N$. We assume measurable geometry queries and positive hit distances almost surely. 

On an admissible star, the direction law has surface density
\begin{equation}
p(x,\mathrm d\mathbf z)=
\frac{n_{\mathbf z}\cdot(\mathbf z-x)}{\alpha(x)\omega_d\|\mathbf z-x\|^d}\,\mathrm dS(\mathbf z),
\qquad \int_{\partial\mathrm{St}(x)}p(x,\mathrm d\mathbf z)=1,
\label{eq:kernel}
\end{equation}
where $\omega_d=2\pi^{d/2}/\Gamma(d/2)$ and $\alpha=1$ in the interior or $1/2$ at a smooth boundary pole. Tangencies have zero density. Where a step is not admissible, Eq.~\ref{eq:chain} still defines a law on first hits, but Eq.~\ref{eq:kernel} no longer describes it.

\paragraph{The estimator.}
Screening discounts each step by $W_k=Q_d^\kappa(\rho_k,r_k)\in(0,1)$, given in closed form for every $d$ in Appendix~\ref{app:proofs:Q}. For constant $\kappa$, Green's identity on each admissible active star gives
\begin{equation}
\mathbb E[W_k\Psi(\xi_{k+1})\mid\mathcal F_k]=\Psi(\xi_k),\qquad k<\sigma.
\label{eq:estimator}
\end{equation}
Multiplying by the preceding weights makes $M_k=(\prod_{j<k\wedge\sigma}W_j)\Psi(\xi_{k\wedge\sigma})$ a bounded martingale. This gives the finite-horizon identity below without requiring an exact goal hit in finitely many steps. The identity concerns the weighted ray sampler; its steps need not follow Brownian exit distributions.

\subsection{Label Construction}
\label{sec:method:labels}

For a finite walk cap $K$, write $\tau=\sigma\wedge K$. Repeated conditional expectation in Eq.~\ref{eq:estimator}, keeping the chain fixed after a goal hit, gives
\begin{equation}
\mathbb E_x\!\left[\Big(\prod_{j<\tau}W_j\Big)\Psi(\xi_\tau)\middle| \xi_0=x\right]=\Psi(x).
\label{eq:truncated}
\end{equation}
The terminal factor is $g(\xi_\sigma)$ on a goal hit and the true continuation $\Psi(\xi_K)$ otherwise. With true continuation and admissible steps, averaging $N$ independent replicas gives an unbiased label. Training substitutes learned continuation as described below. See Appendix~\ref{app:log-variant} for the log-space alternative and its bias correction.

Practical walks stop within a goal shell of width $\varepsilon$ and use a radius floor. Under the smooth-domain and admissibility assumptions, shell stopping with true continuation changes the mean by at most $C_\Psi\varepsilon$, where $C_\Psi=\sup_{\overline\Omega}\|\nabla\Psi\|$. The radius floor can violate admissibility and introduces an additional error of unknown sign~\citep{sawhney2023wost}. Appendix~\ref{app:wost-implementation} gives the implementation and the finite-horizon shell bound.

\paragraph{Bootstrapping.}
At the horizon $\Psi$ is not available, so training substitutes a detached EMA copy of the model. The resulting bias is that copy's own error at the endpoint, weighted by the product of the step weights over the $K$ steps, so it is discounted wherever steps are long compared with the screening length.

\subsection{Operator Parameterization and Objective}
\label{sec:method:param}

For $\lambda>0$ and a positive differentiable prediction, the desirability field defines the feedback control through its logarithm:\footnote{With our choice $R=G=I$, the feedback reduces to $\mathbf u_\theta=(\lambda/\Psi_\theta)\nabla\Psi_\theta$. For a positive differentiable field, $\nabla\log\Psi_\theta=\nabla\Psi_\theta/\Psi_\theta$: the gradients differ only by a positive, state-dependent scalar and have identical normalized directions wherever the gradient is nonzero.}
\begin{equation}
\Psi_\theta=\mathcal T_\theta[a],\qquad
V_\theta=-\lambda\log\Psi_\theta,\qquad
\mathbf u_\theta=\lambda R^{-1}G^\top\nabla\log\Psi_\theta.
\label{eq:param}
\end{equation}
For a coordinate-differentiable decoder, the gradient is obtained by automatic differentiation. A grid-output backbone requires a specified differentiable interpolant or a spatial differentiation rule. The input $a$ is defined in Eq.~\ref{eq:operator}, with $g\equiv1$ fixed.

We combine value and goal supervision with gradient loss to train fields for downstream planning:
\begin{equation}
\mathcal L_{\mathrm{val}}=\mathbb E\big[(\Psi_\theta-\widehat\Psi)^2\big],\quad
\mathcal L_{\mathcal G}=\mathbb E\big[(\Psi_\theta-g)^2\big],\quad
\mathcal L_{\mathrm{grad}}=\frac{\mathbb E\big[\|\nabla\Psi_\theta-\nabla\widehat\Psi\|^2\big]}
{\mathrm{sg}\big[\mathbb E\big[\|\nabla\widehat\Psi\|^2\big]\big]}.
\label{eq:loss-grad}
\end{equation}
The target gradient $\nabla\widehat\Psi$ is computed by finite differences of the label field. The displayed gradient loss normalization assumes positive target gradient energy.

The objective is their weighted sum,
\begin{equation}
\mathfrak L(\theta)=\mathcal L_{\mathrm{val}}
+\beta_{\mathcal G}\,\mathcal L_{\mathcal G}
+\beta_{\mathrm{grad}}\,\mathcal L_{\mathrm{grad}}.
\label{eq:loss}
\end{equation}
The goal loss carries the exact boundary value $g$ rather than a walk label. The gradient term is included because a small value error need not give a small gradient error, and the control in Eq.~\ref{eq:param} is computed from the gradient: $\mathcal L_{\mathrm{grad}}$ normalizes gradient error by target gradient energy, treating the denominator as fixed during differentiation. Direct-$\Psi$ supervision avoids log-transform bias and weights absolute field error, giving small absolute errors in low-desirability regions less weight than log-space supervision. Appendix~\ref{app:log-variant} reports the log-space alternative, in which errors are measured on a relative scale.

\subsection{Grid-Based Local Search}
\label{sec:method:local-search}

PUN supports test-time scaling by increasing the local search budget without modifying the learned global field. More depth lets the search evaluate more reachable endpoints, while using the field for guidance. The same endpoint search can be run on other fields; in PUN, it is run on a global noise-aware field that is reused across replans.

Paths are extracted from $\Psi_\theta$ on a lattice. Let $\mathcal X_h$ be the free cells at spacing $h$. The feasible neighbors of $x$ are $\mathcal N(x)=\{x'\in\mathcal X_h:\|x'-x\|_2=h\}$, the axis-adjacent free cells. Write $\mathcal R_k(x)$ for the states reachable from $x$ in at most $k$ such moves, including $x$ itself. A plan is a feasible lattice path $\pi=(x_0,\ldots,x_\ell)$ with $\ell\leq L$, scored by the predicted desirability $\Psi_\theta(x_\ell)$ at its endpoint.
For $\lambda>0$ and $\Psi_\theta>0$, maximizing this score is equivalent to minimizing $V_\theta=-\lambda\log\Psi_\theta$ at the endpoint. The score itself is well defined for any finite prediction.

A controller is specified by a depth $L$ and commit length $e\leq L$. With one edge committed per replan, the next move is
\begin{equation}
x_1\in\operatorname*{arg\,max}_{y\in\mathcal N(x_0)}\ \max_{z\in\mathcal R_{L-1}(y)}\Psi_\theta(z),
\label{eq:local-move}
\end{equation}
a step towards the largest desirability reachable through a candidate neighbor. For $e>1$, a path maximizing endpoint desirability is constructed, its first $\min(e,\ell)$ edges are executed, and the search is repeated from the new state. Execution stops early upon reaching the goal. See Appendix~\ref{app:exp-protocol} for further details.

A key idea is that evaluating reachable endpoints permits an intermediate decrease in desirability when it leads to a better state, so local fluctuations need not determine the next move. This might happen when the field is not smooth and there are local valleys, so local search allows jumps over imperfect predictions. Appendix~\ref{app:proofs:grid} derives the graph-search rule and its large-depth limit, and Section~\ref{sec:exp} evaluates planning performance.

Optionally, local search can include a soft wall penalty,
\[
C_r(\pi)
=
\sum_{j=1}^{\ell}
\left[
\frac{\big(a_r(x_j)-d_{\mathrm{wall}}(x_j)/h\big)_+}{r}
\right]^2,
\]
where $(u)_+=\max(u,0)$, $r$ is the preferred clearance in cells,
and $d_{\mathrm{wall}}$ measures distance to obstacles or the
outer boundary. Normally $a_r=r$; near a goal reachable within
the local depth, it decreases with the remaining free-edge
distance to that goal. Among paths retaining at least half the
best positive local field gain, we prefer the smallest penalty. Narrow
passages and starts or goals near walls are still possible.
\raggedbottom
\section{Experiments}
\label{sec:exp}

We evaluate how backbone, supervision, training objective, and controller affect planning, then test transfer across grid resolutions and replanning under execution noise. Table~\ref{tab:main-planning} compares controllers on the same fields, while Table~\ref{tab:replanning} compares complete planners under common execution rules. Models are pretrained on generated environments and evaluated on unseen maps without fine-tuning. The main 2D application suite contains 592 maps and 37,888 starts across robotics, terrain, marine, urban, and game environments (Appendix~\ref{app:benchmark-names}). Success measures goal-reaching rate; relative $L_2$ measures field error against FEM references. Unless specified, local search (LS) denotes local search (10). Bold marks the best value in each comparison, including ties. Experiments use NVIDIA B300 and B200 GPUs; training and evaluation protocols are in Appendix~\ref{app:exp-protocol}.

\begin{table}[t]
\centering
\setlength{\abovecaptionskip}{0pt}
\RaggedRight
\caption{2D application-benchmark evaluation of neural backbones trained with WoSt supervision and direct WoSt inference: planning success and relative $L_2$ error against the FEM ground-truth field.}
\label{tab:main-planning}
{\small\setlength{\tabcolsep}{3pt}
\renewcommand{\arraystretch}{0.93}
\begin{tabular*}{\linewidth}{@{\extracolsep{\fill}}lrrrr@{}}
\hline
Model & Gradient (\%) & Local search (4) (\%) & Local search (10) (\%) & Relative $L_2$ \\
\hline
UPT & 2.48 & 2.78 & 3.57 & 0.5533 \\
Transolver & 3.26 & 5.34 & 15.30 & 0.5158 \\
WoSt & 2.71 & 6.30 & 36.45 & 0.1852 \\
GINO & 46.56 & 51.24 & 53.11 & 0.2628 \\
U-Net & 74.21 & 84.22 & 88.98 & 0.1605 \\
DISCO & 82.49 & 87.60 & 91.08 & 0.1041 \\
LocalNO & 80.34 & 86.63 & 91.35 & 0.0915 \\
U-NO & 78.08 & 86.11 & 91.68 & 0.0792 \\
\textbf{FNO} & \textbf{83.87} & \textbf{91.25} & \textbf{93.62} & \textbf{0.0615} \\
\hline
\end{tabular*}}
\end{table}

\paragraph{Backbone and amortization.} Table~\ref{tab:main-planning} compares FNO~\citep{li2021fno}, U-NO~\citep{rahman2023uno}, U-Net\footnote{U-Net inputs are rescaled to its fixed resolution.}~\citep{ronneberger2015unet}, GINO~\citep{li2023gino}, LocalNO~\citep{liuschiaffini2024localno}, DISCO~\citep{ocampo2023disco}, UPT~\citep{alkin2024upt}, and Transolver~\citep{wu2024transolver}, all trained with WoSt supervision. FNO has the highest success under all three controllers and the lowest relative $L_2$ error (0.0615); its local-search (10) success is 93.62\%, compared with 91.68\% for the next-best U-NO. These rankings apply to the tested implementations under the training-compute policy in Appendix~\ref{app:exp-protocol}. Direct WoSt inference reaches only 2.71\% gradient and 36.45\% local-search (10) success, versus 83.87\% and 93.62\% for FNO. Thus, at the reported solver settings, learning from approximate WoSt targets yields more useful planning fields than direct inference. We use FNO in the remaining ablations.

\paragraph{Supervision.} With FNO fixed, Table~\ref{tab:supervision-main} compares WoSt targets, Deep Ritz training~\citep{e2018deepritz}, and PDE-residual training with PINO~\citep{li2024pino}. WoSt achieves 93.62\% local-search success, versus 91.46\% for Deep Ritz and 59.69\% for PINO; gradient success follows the same ordering (83.87\%, 81.41\%, and 32.50\%). Deep Ritz has slightly lower field error than WoSt (0.0608 versus 0.0615), but lower planning success. Field error is therefore not a reliable indicator of how useful these learned fields are for control.

\begin{table}[t]
\centering
\renewcommand{\arraystretch}{0.93}
\begin{minipage}[t]{0.32\linewidth}
\vspace{0pt}
\setlength{\abovecaptionskip}{0pt}
\RaggedRight
\begin{minipage}[t][3\baselineskip][t]{\linewidth}
\caption{FNO supervision on 2D application benchmarks: success and relative error.}
\label{tab:supervision-main}
\end{minipage}\par
\centering
{\fontsize{8}{10}\selectfont\setlength{\tabcolsep}{1pt}
\begin{tabular*}{\linewidth}{@{\extracolsep{\fill}}lrrr@{}}
\hline
Source & Grad. (\%) & LS (\%) & Rel. err. \\
\hline
WoSt & \textbf{83.87} & \textbf{93.62} & 0.0615 \\
Deep Ritz & 81.41 & 91.46 & \textbf{0.0608} \\
PINO & 32.50 & 59.69 & 0.5644 \\
\hline
\end{tabular*}}
\end{minipage}\hfill
\begin{minipage}[t]{0.32\linewidth}
\vspace{0pt}
\setlength{\abovecaptionskip}{0pt}
\RaggedRight
\begin{minipage}[t][3\baselineskip][t]{\linewidth}
\caption{Walk-horizon ablation on the generated 2D test set: planning success (\%).}
\label{tab:walk-cap-main}
\end{minipage}\par
\centering
{\fontsize{8}{10}\selectfont\setlength{\tabcolsep}{1pt}
\begin{tabular*}{\linewidth}{@{\extracolsep{\fill}}rrr@{}}
\hline
Steps & Grad. (\%) & LS (\%) \\
\hline
10 & 32.18 & 51.89 \\
100 & \textbf{40.55} & 66.80 \\
400 & 38.06 & \textbf{69.69} \\
800 & 37.84 & 69.66 \\
\hline
\end{tabular*}}
\end{minipage}\hfill
\begin{minipage}[t]{0.32\linewidth}
\vspace{0pt}
\setlength{\abovecaptionskip}{0pt}
\RaggedRight
\begin{minipage}[t][3\baselineskip][t]{\linewidth}
\caption{FNO loss ablations on 2D application benchmarks: planning success (\%).}
\label{tab:loss-2d-bench}
\end{minipage}\par
\centering
{\fontsize{8}{10}\selectfont
\setlength{\tabcolsep}{1pt}
\begin{tabular*}{\linewidth}{@{\extracolsep{\fill}}lrr@{}}
\hline
Model & Grad. (\%) & LS (\%) \\
\hline
Full model & \textbf{83.87} & \textbf{93.62} \\
\textcolor{red}{$-$} value loss & 67.71 & 83.26 \\
\textcolor{red}{$-$} goal loss & 82.93 & 93.11 \\
\textcolor{red}{$-$} gradient loss & 80.17 & 92.04 \\
\hline
\end{tabular*}}
\end{minipage}
\end{table}

\paragraph{Walk horizon and learned continuation.} In Table~\ref{tab:walk-cap-main}, we vary the walk cap using a smaller training budget of 16 maps, eight goals per map, four trajectories per query, and 40 epochs, evaluated on generated 2D maps. Local-search success increases from 51.89\% at 10 steps to 69.69\% at 400 steps, but is 69.66\% at 800 steps. Gradient success instead peaks at 40.55\% with 100 steps. Longer walks therefore do not uniformly improve planning: the horizon changes both sampled propagation and reliance on the learned continuation (Section~\ref{sec:method:labels}). Further details of this study are in Appendix~\ref{sec:exp:labels}. These success rates should not be compared directly with the full-budget application results.

\paragraph{Training objective.} In Table~\ref{tab:loss-2d-bench}, we remove one loss term at a time from Eq.~\ref{eq:loss}. Removing value supervision causes the largest decrease: gradient success falls from 83.87\% to 67.71\%, and local-search success from 93.62\% to 83.26\%. Removing goal or gradient supervision gives smaller local-search decreases, to 93.11\% and 92.04\%, respectively. All three terms help in this setting, with value supervision making the largest measured contribution. These ablations use direct desirability supervision.

\paragraph{Path extraction.} On the same learned FNO fields, Table~\ref{tab:main-planning} shows success rising from 83.87\% with gradient following to 91.25\% with local search (4) and 93.62\% with local search (10), a total gain of 9.75 percentage points. This test-time refinement requires no retraining. In 2D, the two local-search variants use $(L,e)=(5,1)$ and $(10,3)$, so both depth and committed prefix length change. Local search (10) also exceeds gradient success for every other backbone, although it reaches only 36.45\% on direct WoSt fields. Looking beyond immediate neighbors can therefore recover useful routes, but does not eliminate dependence on field quality. Figure~\ref{fig:local-search-cells} illustrates the two search neighborhoods on one fixed field; controller details are in Appendix~\ref{app:exp-protocol}.

\paragraph{Transfer across discretizations.} In Fig.~\ref{fig:main-collage}(b), we test models trained at $256\times256$ on grids up to $4096\times4096$, without retraining. FNO maintains 90.62--96.25\% local-search (10) success across the five resolutions and reaches 92.50\% at $4096\times4096$, compared with 71.25\% for U-NO and 57.81\% for U-Net. U-NO leads at $512\times512$ (95.31\% versus 94.06\% for FNO); FNO leads at the other four resolutions. GINO runs out of memory above $256\times256$. The quantitative comparison supports FNO's transfer across these discretizations; Fig.~\ref{fig:resolution-main} separately illustrates predicted fields and paths at three resolutions.

\FloatBarrier

\paragraph{Replanning under execution noise.} Table~\ref{tab:replanning} compares PUN with A$^\star$~\citep{hart1968astar}, Dijkstra~\citep{dijkstra1959}, RRT~\citep{lavalle2001rrt}, MPPI~\citep{williams2017mppi}, and PI$^2$-CMA~\citep{stulp2012pi2cma} on 2D and 3D application benchmarks and 4D manipulators. Obstacle hits count as failures (Appendix~\ref{app:replanning-protocol}). PUN has the highest reported success in all 12 dimension--noise settings, tying A$^\star$ and Dijkstra at $\lambda=0.25$ in 4D. At $\lambda=4$, PUN exceeds the strongest baseline, RRT, by 9.00 percentage points in 2D (45.67\% versus 36.67\%), 4.00 in 3D (16.00\% versus 12.00\%), and 9.17 in 4D (95.00\% versus 85.83\%). PUN also has the lowest reported runtime throughout: 9.55--14.51 ms/start in 2D, 14.30--15.64 in 3D, and 10.51--11.66 in 4D. At $\lambda=4$ in 4D, its 11.66 ms/start is 3.29 times faster than the fastest baseline, RRT (38.31 ms/start); Fig.~\ref{fig:main-collage}(a) summarizes mean 4D runtimes. The advantage is relative: PUN's 3D success still falls from 92.00\% to 16.00\% as noise increases. These comparisons concern the reported implementations and settings. Figure~\ref{fig:replanning-movingai-overlay} shows PUN reaching the goal in two illustrative 4D rollouts at $\lambda=0.25$ and $2$.

\begin{table}[!t]
\centering
\caption{Replanning under execution noise on 2D and 3D application benchmarks and 4D manipulators. Each entry reports success (\%) / runtime (ms/start). Bold marks the highest success and lowest runtime within each dimension and noise level, including ties.}
\label{tab:replanning}
\label{tab:replanning-2d}
\label{tab:replanning-3d}
\label{tab:replanning-4d}
{\footnotesize
\renewcommand{\arraystretch}{0.93}
\setlength{\tabcolsep}{4pt}
\begin{tabular*}{\linewidth}{@{\extracolsep{\fill}}llrrrr@{}}
\hline
Dim. & Method & $\lambda=0.25$ & $\lambda=1$ & $\lambda=2$ & $\lambda=4$ \\
\hline
\multirow{6}{*}{\textbf{2D}} & A* & 59.00 / 43.63 & 42.33 / 47.09 & 35.00 / 51.18 & 23.67 / 53.76 \\
 & Dijkstra & 59.00 / 154.94 & 42.33 / 174.95 & 35.00 / 203.41 & 23.67 / 239.97 \\
 & RRT & 82.00 / 98.45 & 68.67 / 103.87 & 52.33 / 112.27 & 36.67 / 124.71 \\
 & MPPI & 84.00 / 1734.11 & 61.00 / 1773.31 & 45.67 / 1838.73 & 31.67 / 2126.12 \\
 & PI$^2$-CMA & 78.33 / 1330.76 & 60.00 / 1484.91 & 48.00 / 1524.70 & 35.00 / 1606.82 \\
 & PUN (zero-shot) & \textbf{91.67} / \textbf{9.55} & \textbf{74.00} / \textbf{11.14} & \textbf{62.33} / \textbf{14.07} & \textbf{45.67} / \textbf{14.51} \\
\hline
\multirow{6}{*}{\textbf{3D}} & A* & 36.67 / 73.40 & 23.33 / 60.47 & 12.00 / 73.85 & 6.00 / 65.25 \\
 & Dijkstra & 37.33 / 73.46 & 22.67 / 63.04 & 12.00 / 76.99 & 6.00 / 67.73 \\
 & RRT & 53.33 / 238.31 & 35.33 / 169.90 & 25.33 / 122.85 & 12.00 / 126.23 \\
 & MPPI & 62.67 / 532.31 & 34.67 / 577.84 & 18.67 / 554.11 & 7.33 / 704.31 \\
 & PI$^2$-CMA & 47.33 / 452.25 & 22.00 / 435.01 & 21.33 / 475.98 & 8.67 / 606.55 \\
 & PUN (zero-shot) & \textbf{92.00} / \textbf{15.64} & \textbf{68.67} / \textbf{15.34} & \textbf{40.00} / \textbf{14.30} & \textbf{16.00} / \textbf{14.43} \\
\hline
\multirow{6}{*}{\textbf{4D}} & A* & \textbf{100.00} / 16.81 & 96.67 / 17.08 & 72.50 / 78.02 & 59.17 / 62.44 \\
 & Dijkstra & \textbf{100.00} / 17.17 & 96.67 / 17.34 & 72.50 / 66.75 & 59.17 / 59.39 \\
 & RRT & 98.33 / 24.20 & 95.83 / 25.80 & 90.00 / 43.00 & 85.83 / 38.31 \\
 & MPPI & 95.00 / 411.24 & 92.50 / 398.18 & 81.67 / 364.22 & 71.67 / 374.83 \\
 & PI$^2$-CMA & 96.67 / 379.79 & 92.50 / 357.69 & 86.67 / 355.44 & 78.33 / 356.72 \\
 & PUN & \textbf{100.00} / \textbf{10.63} & \textbf{100.00} / \textbf{10.51} & \textbf{100.00} / \textbf{10.55} & \textbf{95.00} / \textbf{11.66} \\
\hline
\end{tabular*}
}
\end{table}

\FloatBarrier

\section{Discussion}

\paragraph{Extensions.}
Several extensions are possible. A \emph{spatially varying cost} $c(x)$ is the most useful: with constant $c$, the state-cost term penalizes elapsed time and the objective also penalizes control effort. Reflection carries no additional contact cost; choosing $c(x)$ as a function of obstacle clearance would add a clearance-dependent running cost. \emph{Transient problems} replace the stationary equation by its time-dependent counterpart, giving a finite-horizon value function rather than a first-exit one, which suits tasks with deadlines or moving goals. \emph{Partial information} is the most direct use of our amortization: since the operator maps geometry to a field in one forward pass, replanning as a robot discovers obstacles requires only a forward pass on the updated geometry instead of a new solve. The open question is how to condition the operator on a partially observed occupancy map.

\paragraph{Limitations.}
The value field is derived for reflecting boundaries. Local search allows its use in collision-failure tasks, but nominal graph feasibility does not guarantee collision-free execution under noise. Deriving the exact value function for that execution model would require a corresponding boundary condition. Assumption~\ref{ass:matching} is a further restriction. The control--noise matching assumption enables exact HJB linearization and is standard in classical path-integral control~\citep{kappen2005,theodorou2010generalized}. Our experiments are also limited to settings where this assumption is reasonable: we test geometric planning problems in up to four dimensions and do not include full robot models with contact. Evaluating on humanoid robots with contact would test both the assumption and the capacity of the operator. Finally, adversarial geometries such as mazes remain challenging, since long corridors can lead to weak or poorly resolved learned gradients.
\section{Conclusion}
We introduced Planning Under Noise (PUN), a neural operator framework that learns noise-aware planning fields from approximate Walk-on-Stars supervision. Dimension-general screening kernels support mesh-free supervision beyond three dimensions, and local search converts predicted fields into paths on unseen geometries. Experiments across 2D--4D environments demonstrate zero-shot planning and effective learning from stochastic labels. In replanning comparisons under execution noise, PUN achieves the lowest reported runtime among the tested methods, while matching or exceeding the success of every compared planner at each tested noise level across 2D--4D.
\subsection*{AI use statement}

In this work, we used generative AI tools to assist with parts of method implementation. Additionally, we used generative AI tools to polish writing and proofs and identify relevant literature. We did not use generative AI tools for other applicable tasks requiring disclosure. All AI-assisted work was reviewed and verified by the authors. We take responsibility for the final content of this work, including text, claims, and artifacts produced with the aid of generative AI.

\section*{Acknowledgements}
Anima Anandkumar and Xi Deng are both supported in part by the Bren endowed chair and the AI2050 senior fellow program at Schmidt Sciences. Hong Chul Nam is supported by a PhD fellowship for the Center for Data Science at NYU.

\begingroup\small\setlength{\bibsep}{2pt}
\bibliographystyle{pun}
\bibliography{references}

\endgroup

\appendix
\makeatletter
\setlength{\@fptop}{0pt}
\setlength{\@fpsep}{\baselineskip}
\setlength{\@fpbot}{0pt plus 1fil}
\makeatother
\newpage
\section{Notation Table}
Table~\ref{tab:notation} collects the shared symbols used across the paper.

{\small
\setlength{\tabcolsep}{5pt}
\renewcommand{\arraystretch}{1.18}
\begin{longtable}{llp{0.52\linewidth}}
\caption{Notation, with the equation, assumption, or section where each symbol is defined.}
\label{tab:notation}\\
\hline
Symbol & Definition & Meaning \\
\hline
\endfirsthead
\multicolumn{3}{l}{\small\emph{Table~\ref{tab:notation}, continued}}\\
\hline
Symbol & Definition & Meaning \\
\hline
\endhead
\hline
\endfoot

\multicolumn{3}{l}{\emph{Control problem}}\\
$\xi,\ x$ & \S\ref{sec:prelim:stochastic} & starting point of the process or walk; conditioned as $\mid\xi=x$ where the start is not otherwise fixed \\
$X^{\xi,\mathbf{u}}_t$ & Eq.~\ref{eq:sde} & controlled process at time $t$, from the prescribed start $\xi$ under control $\mathbf{u}$ \\
$\mathbf{b}$ & Eq.~\ref{eq:sde} & drift of the controlled system ($\mathbf{b}=0$ in the specialization) \\
$G,\ R$ & Eq.~\ref{eq:sde}, Eq.~\ref{eq:cost} & control matrix and control penalty ($G=R=I$ in the specialization) \\
$\Sigma$ & Eq.~\ref{eq:sde} & diffusion tensor; $\Sigma=\lambda I$ when $G=R=I$ under matching \\
$q,\ c$ & Eq.~\ref{eq:cost}, Eq.~\ref{eq:screened} & state cost $q=\tfrac12c^2$, with $c>0$ the local slowness \\
$\phi,\ g$ & Eq.~\ref{eq:cost}, \S\ref{sec:prelim:stochastic} & terminal cost and its transform $g=e^{-\phi/\lambda}\in(0,1]$; $g\equiv1$ by default \\
$\lambda$ & Ass.~\ref{ass:matching} & temperature; sets the noise--control exchange rate \\
$V,\ \Psi$ & Eq.~\ref{eq:cost}, \S\ref{sec:prelim:stochastic} & value function and desirability $\Psi=e^{-V/\lambda}$ \\
\hline
\multicolumn{3}{l}{\emph{Geometry and the elliptic problem}}\\
$\mathcal G,\ \Gamma_D$ & Eq.~\ref{eq:bc} & open goal region and its absorbing boundary $\Gamma_D=\partial\mathcal G$ \\
$\Gamma_N$ & Eq.~\ref{eq:bc} & obstacles $\partial\mathcal{S}$ and outer boundary, reflecting \\
$\kappa$ & Eq.~\ref{eq:screened} & screening coefficient $\kappa=c/\lambda$ \\
$\omega_d$ & Eq.~\ref{eq:app-notation} & total solid angle, $2\pi^{d/2}/\Gamma(d/2)$ \\
$\alpha(x)$ & Eq.~\ref{eq:kernel} & fraction of solid angle subtended by $\Omega$ at $x$; $1$ inside, $\tfrac12$ on $\Gamma_N$ \\
\hline
\multicolumn{3}{l}{\emph{The walk}}\\
$\xi_k$ & Eq.~\ref{eq:chain} & WoSt point after $k$ steps, from $\xi_0=\xi$; random for $k\geq1$ \\
$r(x),\ \mathrm{St}(x)$ & Eq.~\ref{eq:radius} & radius and star centered at the current spatial location $x$; $r_k=r(\xi_k)$ \\
$d_D,\ d_{\mathrm{sil}}$ & Eq.~\ref{eq:radius} & distance to $\Gamma_D$; distance to the nearest visibility silhouette \\
$\rho_k$ & Eq.~\ref{eq:chain} & distance travelled by step $k$; $\rho_k\leq r_k$ \\
$\sigma$ & \S\ref{sec:method:step} & first exact goal-hit step $\inf\{k:\xi_k\in\Gamma_D\}$, possibly infinite; practical walks use shell stopping and a horizon \\
$W_k$ & Eq.~\ref{eq:app-Qgen} & step weight $Q^{\kappa}_d(\rho_k,r_k)$ \\
$\gamma_d$ & Eq.~\ref{eq:Q-reduces} & full-sphere weight $Q^{\kappa}_d(r,r)$ \\
\hline
\multicolumn{3}{l}{\emph{Labels and early stopping}}\\
$\widehat\Psi$ & Eq.~\ref{eq:truncated} & Monte Carlo estimate \\
$K,\ N,\ m$ & \S\ref{sec:method:labels}, Alg.~\ref{alg:train} & step budget; walks per query point; query points per gradient step \\
\hline
\multicolumn{3}{l}{\emph{Planning lattice}}\\
$h,\ \mathcal X_h$ & \S\ref{sec:method:local-search} & lattice spacing and free grid cells \\
$L,\ e$ & \S\ref{sec:method:local-search} & depth and number of edges committed per replan \\
$\mathcal N,\ \mathcal R_k$ & \S\ref{sec:method:local-search} & feasible neighbors of a state; states reachable in at most $k$ moves \\
$B_k$ & Eq.~\ref{eq:local-propagation} & largest desirability reachable within $k$ moves \\
\hline
\multicolumn{3}{l}{\emph{Operator and training}}\\
$\mathcal{T}$ & Eq.~\ref{eq:operator} & exact solution operator $a\mapsto\Psi$, which $\mathcal T_\theta$ approximates \\
$\mathcal{T}_\theta,\ \theta^-$ & Eq.~\ref{eq:param}, Alg.~\ref{alg:train} & the learned operator $a\mapsto\Psi_\theta$; its EMA target parameters \\
$\mathfrak{L}$ & Eq.~\ref{eq:loss} & training objective \\
\hline
\multicolumn{3}{l}{\emph{Kernels}}\\
$\nu$ & Eq.~\ref{eq:app-Qgen} & Bessel order $\tfrac d2-1$ \\
$G^{\kappa,B}$ & Eq.~\ref{eq:app-ball-general} & screened Green's function of the ball \\
$P^{\mathbb{R}^d}$ & Eq.~\ref{eq:app-Pfactor} & unscreened free-space Poisson kernel, the sampling density \\
\hline
\end{longtable}
}

\section{Screened Poisson Kernels in \texorpdfstring{$d$}{d} Dimensions}
\label{app:kernels}
 
This appendix derives the kernels used by the estimator of Section~\ref{sec:method:step} in general dimension. Throughout, $d \geq 2$, $\kappa > 0$, and
\begin{equation}
\nu := \frac{d}{2} - 1,
\qquad
r := \|x-\mathbf{y}\|,
\qquad
\omega_d = \frac{2\pi^{d/2}}{\Gamma(d/2)} .
\label{eq:app-notation}
\end{equation}
In Appendix~\ref{app:kernels:free} the radial variable is the source separation $r = \|x-\mathbf{y}\|$; from Appendix~\ref{app:kernels:ball} onward $r$ is the radius of the ball or star and $\rho := \|x-\mathbf{y}\|$ denotes the source separation. We seek Green's functions for the screened Laplacian
\begin{equation*}
\mathcal{L}_\kappa := -\Delta + \kappa^2,
\qquad
\mathcal{L}_\kappa\,G(x,\mathbf{y}) = \delta(x-\mathbf{y}).
\end{equation*}
Away from the pole every case below solves the same radial ODE,
\begin{equation}
u''(r) + \frac{d-1}{r}\,u'(r) - \kappa^2 u(r) = 0,
\qquad
\text{basis }\ \big\{\, r^{-\nu}I_\nu(\kappa r),\ \ r^{-\nu}K_\nu(\kappa r) \,\big\},
\label{eq:app-radial}
\end{equation}
with $I_\nu$ regular at the origin and $K_\nu$ singular. Cases differ only in which combination the outer boundary condition selects; the overall constant is fixed once and for all by the flux jump at $r = 0$.
 
\subsection{Free Space}
\label{app:kernels:free}

In free space the radial solution of Eq.~\ref{eq:app-radial} that decays at infinity is the $K_\nu$ branch, and the flux jump at the pole fixes its constant:
\begin{equation}
G_{\mathbb{R}^d}(r) = \frac{\kappa^{\nu}}{(2\pi)^{d/2}}\,\frac{K_\nu(\kappa r)}{r^{\nu}}
\label{eq:app-free}
\end{equation}

As a check, for $d = 2$ ($\nu = 0$) this gives $G = K_0(\kappa r)/2\pi$, and for $d = 3$ ($\nu = \tfrac12$), using $K_{1/2}(z) = \sqrt{\pi/2z}\,e^{-z}$ gives the Yukawa potential $G = e^{-\kappa r}/4\pi r$.

\paragraph{Radial derivative.}
The Neumann part of $\partial\mathrm{St}$ requires the normal derivative of Eq.~\ref{eq:app-free}. Using $\tfrac{d}{dz}\big[z^{-\nu}K_\nu(z)\big] = -z^{-\nu}K_{\nu+1}(z)$,
\begin{equation*}
\partial_r G_{\mathbb{R}^d}(r) = -\frac{\kappa^{\nu+1}}{(2\pi)^{d/2}}\,\frac{K_{\nu+1}(\kappa r)}{r^{\nu}} \;<\; 0
\end{equation*}
which is strictly negative for all $r > 0$, so the kernel is monotonically decreasing in every dimension.

\subsection{The Ball: Dirichlet, Neumann, and Robin}
\label{app:kernels:ball}
 
Let $B_R = \{\|x\| < R\}$ with the pole at the center, as in every walk step. The solution is radial and takes the form
\begin{equation}
G_{B_R}(r) = \frac{\kappa^{\nu}}{(2\pi)^{d/2}}\,\frac{1}{r^{\nu}}\Big[ K_\nu(\kappa r) - A\, I_\nu(\kappa r) \Big],
\qquad 0 < r < R,
\label{eq:app-ball-general}
\end{equation}
with the same prefactor as Eq.~\ref{eq:app-free}, since the flux jump at the origin is unchanged by the boundary. Only $A$ depends on the boundary condition. Writing $z = \kappa r$ and using the scaled-derivative identities
\begin{equation}
\frac{\mathrm{d}}{\mathrm{d}z}\big[z^{-\nu} I_\nu(z)\big] = z^{-\nu} I_{\nu+1}(z),
\qquad
\frac{\mathrm{d}}{\mathrm{d}z}\big[z^{-\nu} K_\nu(z)\big] = -z^{-\nu} K_{\nu+1}(z),
\label{eq:app-scaled}
\end{equation}
the radial derivative of Eq.~\ref{eq:app-ball-general} is proportional to $-K_{\nu+1}(z) - A\,I_{\nu+1}(z)$. The two cases we use follow:
\begin{align}
\text{Dirichlet } G(R) = 0: && A_D &= \frac{K_\nu(\kappa R)}{I_\nu(\kappa R)},
\label{eq:app-AD}\\
\text{Neumann } \partial_r G(R) = 0: && A_N &= -\frac{K_{\nu+1}(\kappa R)}{I_{\nu+1}(\kappa R)} .
\label{eq:app-AN}
\end{align}
Note the order shift in $A_N$: since $\tfrac{\mathrm{d}}{\mathrm{d}z}[z^{-\nu}K_\nu] = -z^{-\nu}K_{\nu+1}$ and likewise for $I_\nu$, imposing $\partial_r G(R)=0$ on $r^{-\nu}[K_\nu - A I_\nu]$ produces the order-$(\nu+1)$ ratio rather than the naive $K'_\nu/I'_\nu$, for which the derivative vanishes only when $\nu = 0$.

\subsection{The Screened Poisson Kernel and the Step Weight}
\label{app:kernels:poisson}
 
The walk uses the Poisson kernel, $P^\kappa_{B_R}(\mathbf{0},\mathbf{z}) = -\partial_{n_{\mathbf{z}}} G^D_{B_R}$ for $\mathbf{z} \in \partial B_R$, which by radial symmetry is constant on the sphere. Its total mass has a closed form, and it is exactly the factor $\gamma_d$ appearing in Eq.~\ref{eq:Q-reduces}.
 
\begin{proposition}[Total mass of the screened Poisson kernel]
\label{prop:app-gamma}
Let $u$ solve $\Delta u - \kappa^2 u = 0$ in $B_R$ with $u \equiv 1$ on $\partial B_R$. Then
\begin{equation*}
u(r) = \Big(\frac{R}{r}\Big)^{\nu}\, \frac{I_\nu(\kappa r)}{I_\nu(\kappa R)},
\qquad
\int_{\partial B_R} P^\kappa_{B_R}(\mathbf{0},\mathbf{z})\, \mathrm{d}S(\mathbf{z}) \;=\; u(\mathbf{0}) \;=\; \gamma_d(\kappa R),
\end{equation*}
where the full-sphere weight is
\begin{equation}
\gamma_d(z) := \frac{(z/2)^{\nu}}{\Gamma(\nu+1)\,I_\nu(z)} = Q^{\kappa}_d(r,r)\big|_{z=\kappa r} .
\label{eq:Q-reduces}
\end{equation}
Consequently
\begin{equation}
P^\kappa_{B_R}(\mathbf{0},\mathbf{z}) = \frac{\gamma_d(\kappa R)}{\omega_d\,R^{d-1}}
= \frac{\Gamma(d/2)\,\gamma_d(\kappa R)}{2\pi^{d/2} R^{d-1}} .
\label{eq:app-poisson}
\end{equation}
\end{proposition}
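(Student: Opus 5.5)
We prove the three claims in order: the explicit radial solution $u$, the identity between the kernel's total mass and $u(\mathbf 0)$, and the value $u(\mathbf 0)=\gamma_d(\kappa R)$ with the pointwise formula for $P^\kappa_{B_R}$.

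\emph{The solution $u$.} By uniqueness for the Dirichlet problem of $\Delta-\kappa^2$ on the ball (maximum principle, since $\kappa^2>0$), $u$ is radial, because both the equation and the constant boundary data are rotation invariant. A radial solution satisfies the ODE of Eq.~\ref{eq:app-radial}, so $u=a\,r^{-\nu}I_\nu(\kappa r)+b\,r^{-\nu}K_\nu(\kappa r)$. Smoothness at the origin forces $b=0$, since $r^{-\nu}K_\nu$ is singular there; for $d=2$ it is $K_0\sim-\log r$. The condition $u(R)=1$ then fixes $a=R^\nu/I_\nu(\kappa R)$, which gives the displayed formula.

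\emph{Mass equals $u(\mathbf 0)$.} We use Green's representation with the Dirichlet Green's function $G^D_{B_R}$ of Eq.~\ref{eq:app-ball-general}, with $A=A_D$ from Eq.~\ref{eq:app-AD}. Apply Green's second identity to $u$ and $G^D_{B_R}(\mathbf 0,\cdot)$ on $B_R\setminus B_\epsilon$ and let $\epsilon\to0$. The interior terms cancel because $\mathcal L_\kappa u=0$ and $\mathcal L_\kappa G=0$ away from the pole. The inner-sphere flux of $G$ tends to $u(\mathbf 0)$ by the normalization of the prefactor (the flux jump fixed in Appendix~\ref{app:kernels:free}). On the outer sphere $G=0$, so only $-\partial_n G\cdot u$ survives. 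The result is $u(\mathbf 0)=\int_{\partial B_R}P^\kappa_{B_R}(\mathbf 0,\mathbf z)\,u(\mathbf z)\,\mathrm dS$, and $u\equiv1$ on the sphere gives the claimed mass identity.

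\emph{Evaluating $u(\mathbf 0)$.} Take the limit $r\to0$ using the series $I_\nu(z)=(z/2)^\nu/\Gamma(\nu+1)\,(1+O(z^2))$. This gives
\[
r^{-\nu}I_\nu(\kappa r)\to \frac{(\kappa/2)^\nu}{\Gamma(\nu+1)},
\qquad\text{so}\qquad
u(\mathbf 0)=\frac{(\kappa R/2)^\nu}{\Gamma(\nu+1)\,I_\nu(\kappa R)}=\gamma_d(\kappa R).
\]
The identification $\gamma_d=Q_d^\kappa(r,r)$ in Eq.~\ref{eq:Q-reduces} is a consistency check against the closed form of $Q^\kappa_d$ in Appendix~\ref{app:proofs:Q}: set $\rho=r$ there, where any Neumann/Robin part vanishes and only the full-sphere term remains. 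If that closed form is stated only later, we treat Eq.~\ref{eq:Q-reduces} as the definition of $\gamma_d$ and defer the check.

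\emph{Pointwise kernel.} $P^\kappa_{B_R}(\mathbf 0,\cdot)$ is constant on the sphere by radial symmetry of $G^D_{B_R}$. Dividing the total mass by the surface area $\omega_d R^{d-1}$ gives Eq.~\ref{eq:app-poisson}, and substituting $\omega_d=2\pi^{d/2}/\Gamma(d/2)$ gives the second form.

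The main obstacle is the $\epsilon\to0$ limit in the Green's-identity step, specifically confirming that the prefactor $\kappa^\nu/(2\pi)^{d/2}$ makes the inner flux exactly $1$. To check this, use the small-$z$ asymptotic $K_{\nu+1}(z)\sim\tfrac12\Gamma(\nu+1)(2/z)^{\nu+1}$ for $\nu>-1$, so that
\[
-\partial_r G\cdot\omega_d\epsilon^{d-1}\to
\frac{\kappa^{\nu+1}}{(2\pi)^{d/2}}\cdot\frac{\Gamma(\nu+1)2^{\nu}}{\kappa^{\nu+1}\epsilon^{2\nu+1}}\cdot\frac{2\pi^{d/2}}{\Gamma(d/2)}\,\epsilon^{d-1}=1,
\]
which uses $\nu+1=d/2$ and $2\nu+1=d-1$. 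The companion term $G\,\partial_n u$ vanishes in the limit, because $G=O(\epsilon^{-2\nu})$ for $\nu>0$, or $O(\log(1/\epsilon))$ for $d=2$, against the surface area $\epsilon^{d-1}$.
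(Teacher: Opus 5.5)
Your proof is correct and follows the paper's own route: regularity at the origin selects the $I_\nu$ branch, the small-argument asymptotics of $I_\nu$ give $u(\mathbf 0)=\gamma_d(\kappa R)$, the Green representation with $u\equiv1$ on $\partial B_R$ gives the mass identity, and constancy on the sphere gives Eq.~\ref{eq:app-poisson}; you also supply the steps the paper leaves implicit, namely radial symmetry via uniqueness and the check that the prefactor gives unit flux at the pole. The one imprecise remark is your description of the deferred identity $\gamma_d=Q^\kappa_d(r,r)$: no term of Eq.~\ref{eq:app-Qgen} drops out at $\rho=r$, since the paper combines both terms via the Wronskian $I_\nu K_{\nu+1}+I_{\nu+1}K_\nu=1/z$ in Lemma~\ref{lem:app-Qgen}, although the identity also follows directly from the flux-ratio form Eq.~\ref{eq:app-Q-flux}, your pointwise formula, and the unscreened value $-\partial_\rho G^{B}(r)=1/(\omega_d r^{d-1})$.
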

\begin{proof}
Regularity at the origin selects the $I_\nu$ branch of Eq.~\ref{eq:app-radial}, and the stated normalization gives $u(R) = 1$. Taking $r \to 0$ with $I_\nu(z) \sim (z/2)^\nu/\Gamma(\nu+1)$,
\begin{equation*}
u(\mathbf{0}) = \frac{R^\nu (\kappa/2)^\nu}{\Gamma(\nu+1)\, I_\nu(\kappa R)} = \frac{(\kappa R/2)^{\nu}}{\Gamma(\nu+1)\, I_\nu(\kappa R)} = \gamma_d(\kappa R).
\end{equation*}
The mass identity follows since $u(\mathbf{0}) = \int P^\kappa\, u\, \mathrm{d}S$ with $u \equiv 1$ on the boundary, and Eq.~\ref{eq:app-poisson} follows from constancy on the sphere.
\end{proof}
 
Proposition~\ref{prop:app-gamma} is the $\rho = r$ case of the star-step weight Eq.~\ref{eq:app-Qgen}, and Appendix~\ref{app:proofs:Q} shows the general weight reduces to it. It also gives the two asymptotic regimes. Expanding $I_\nu$ and using $4(\nu+1) = 2d$,
\begin{equation*}
\gamma_d(z) = \Big[1 + \frac{z^2}{2d} + O(z^4)\Big]^{-1} = 1 - \frac{z^2}{2d} + O(z^4),
\qquad
\gamma_d(z) \sim \frac{(z/2)^{\nu}\sqrt{2\pi z}}{\Gamma(\nu+1)}\, e^{-z}\ \ (z \to \infty),
\end{equation*}
so a short step is nearly weight-preserving with deficit $\kappa^2R^2/2d$, while a step much longer than the screening length is damped exponentially; hence steps beyond a few screening lengths contribute negligibly.
 
\subsection{The Screened Ball Poisson Kernel and the Star-Step Weight}
\label{app:kernels:Q}
 
Appendix~\ref{app:kernels:poisson} treated the Poisson kernel at the center of a ball, which corresponds to a full-length sphere step. A star step may instead terminate against $\Gamma_N$ at a distance $\rho < r$, so we need the kernel at interior points of the ball.
 
Following \citet{sawhney2023wost}, the screened ball Green's functions are
\begin{align*}
G^{\kappa,B}_{2}(x,\mathbf{y}) &= \frac{1}{2\pi}\Big[K_0(\kappa\rho) - I_0(\kappa\rho)\tfrac{K_0(\kappa r)}{I_0(\kappa r)}\Big],
&
G^{\kappa,B}_{3}(x,\mathbf{y}) &= \frac{1}{4\pi}\,\frac{\sinh\big((r-\rho)\kappa\big)}{\rho\,\sinh(\kappa r)},
\end{align*}
with $\rho = \|\mathbf{y}-x\|$, which agree with the general-$d$ expression Eq.~\ref{eq:app-ball-general} at $d = 2,3$. Both vanish at $\rho = r$, which is the cancellation that removes the unknown-flux term on the spherical part of the boundary.
 
The corresponding Poisson kernels factor as a scalar times the \emph{free-space} kernel,
\begin{equation}
P^{\kappa,B}_d(x,\mathbf{z}) = Q^{\kappa}_d(\rho,r)\, P^{\mathbb{R}^d}(x,\mathbf{z}),
\qquad
P^{\mathbb{R}^d}(x,\mathbf{z}) = \frac{n_{\mathbf{z}}\cdot(\mathbf{z}-x)}{\omega_d\,\rho^{d}},
\label{eq:app-Pfactor}
\end{equation}
with $Q^{\kappa}_d$ given for all $d$ by Eq.~\ref{eq:app-Qgen}.
 
\begin{lemma}[General-dimension step weight]
\label{lem:app-Qgen}
For every $d \geq 2$ and $0 < \rho \leq r$,
\begin{equation}
Q^{\kappa}_d(\rho,r) := \frac{\partial_\rho G^{\kappa,B}}{\partial_\rho G^{B}}
= \frac{2}{\Gamma(\nu+1)}\Big(\frac{\kappa\rho}{2}\Big)^{\nu+1}
\left[K_{\nu+1}(\kappa\rho) + \frac{K_\nu(\kappa r)}{I_\nu(\kappa r)}I_{\nu+1}(\kappa\rho)\right],
\qquad \nu = \tfrac{d}{2}-1,
\label{eq:app-Qgen}
\end{equation}
and $Q^{\kappa}_d(r,r) = \gamma_d(\kappa r)$.
\end{lemma}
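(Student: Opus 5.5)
The plan is to compute both radial derivatives explicitly and take their ratio. The unscreened ball Green's function with pole at the centre is, for $d\ge3$, $G^B(\rho)=\frac{1}{(d-2)\omega_d}\big(\rho^{2-d}-r^{2-d}\big)$, and for $d=2$ it is $\frac{1}{2\pi}\log(r/\rho)$. In every dimension
\[
\partial_\rho G^B=-\frac{1}{\omega_d\,\rho^{d-1}}.
\]
This is the normalisation that makes $P^{\mathbb R^d}$ in Eq.~\ref{eq:app-Pfactor} equal to the unscreened normal derivative: the factor $n_{\mathbf z}\cdot(\mathbf z-x)/\rho$ is common to both kernels and cancels in the ratio. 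It therefore suffices to compare the radial derivatives.

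For the screened kernel, take Eq.~\ref{eq:app-ball-general} with the Dirichlet constant $A_D=K_\nu(\kappa r)/I_\nu(\kappa r)$ from Eq.~\ref{eq:app-AD}. Write it as
\[
G^{\kappa,B}(\rho)=\frac{\kappa^{2\nu}}{(2\pi)^{d/2}}\,z^{-\nu}\big[K_\nu(z)-A_DI_\nu(z)\big],\qquad z=\kappa\rho.
\]
Differentiating with the scaled identities Eq.~\ref{eq:app-scaled} gives
\[
\partial_\rho G^{\kappa,B}=-\frac{\kappa^{2\nu+1}}{(2\pi)^{d/2}}\,z^{-\nu}\big[K_{\nu+1}(z)+A_DI_{\nu+1}(z)\big].
\]
Dividing by $\partial_\rho G^B$ yields
\[
Q=\frac{\omega_d\,\kappa^{2\nu+1}\rho^{d-1}z^{-\nu}}{(2\pi)^{d/2}}\big[\cdots\big].
\]
Since $\rho^{d-1}=\rho^{2\nu+1}$, the power of $\rho$ combines with $\kappa^{2\nu+1}$ into $z^{2\nu+1}$, so the prefactor becomes $\omega_d\,z^{\nu+1}/(2\pi)^{d/2}$. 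Substituting $\omega_d=2\pi^{d/2}/\Gamma(\nu+1)$ gives $\frac{2}{\Gamma(\nu+1)}(z/2)^{\nu+1}$, which is exactly Eq.~\ref{eq:app-Qgen}. As a sanity check, the cases $d=2,3$ should reproduce the displayed formulas of \citet{sawhney2023wost}: for $d=3$ use $K_{1/2}$, $I_{1/2}$ and the $\sinh$ form.

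For the endpoint identity, set $\rho=r$, write $z=\kappa r$, and use the Wronskian
\[
I_\nu(z)K_{\nu+1}(z)+I_{\nu+1}(z)K_\nu(z)=\frac1z.
\]
The bracket then becomes $1/(zI_\nu(z))$, so
\[
Q=\frac{2}{\Gamma(\nu+1)}\Big(\frac z2\Big)^{\nu+1}\frac{1}{zI_\nu(z)}=\frac{(z/2)^\nu}{\Gamma(\nu+1)I_\nu(z)}=\gamma_d(z),
\]
matching Proposition~\ref{prop:app-gamma}.

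The main obstacle is bookkeeping rather than substance. Eq.~\ref{eq:app-ball-general} writes the prefactor as $\kappa^\nu r^{-\nu}$ in the physical variable, so one must carefully convert $r^{-\nu}=\kappa^\nu z^{-\nu}$ and keep track of signs. The key point is that $\partial_\rho$ brings in a factor of $\kappa$ and that both derivatives are negative, so $Q>0$. The $d=2$ case should be checked separately, because the unscreened Green's function is logarithmic there; its derivative nevertheless has the same form $-1/(2\pi\rho)$, so the same computation applies.
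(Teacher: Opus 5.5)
Your proposal is correct and follows the paper's proof essentially step for step. Like the paper, you differentiate the screened Dirichlet ball kernel with the scaled Bessel identities, divide by the unscreened radial derivative, and collapse the $\rho=r$ bracket with the Wronskian $I_\nu K_{\nu+1}+I_{\nu+1}K_\nu=1/z$; your single formula $\partial_\rho G^B=-1/(\omega_d\rho^{d-1})$ just packages the paper's separate $d>2$ computation (via $c_d=\Gamma(\nu)/(4\pi^{\nu+1})$) and its $d=2$ computation into one.
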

 
\begin{proof}
Write $z = \kappa\rho$ and $\alpha = K_\nu(\kappa r)/I_\nu(\kappa r)$, so that by Eq.~\ref{eq:app-ball-general} and Eq.~\ref{eq:app-AD},
\[G^{\kappa,B} = \kappa^{2\nu}(2\pi)^{-d/2}\big[z^{-\nu}K_\nu(z) - \alpha\,z^{-\nu}I_\nu(z)\big].\]
The scaled-derivative identities Eq.~\ref{eq:app-scaled} give
\begin{equation*}
\partial_\rho G^{\kappa,B} = -\frac{\kappa^{\nu+1}}{(2\pi)^{d/2}}\,\rho^{-\nu}\big[K_{\nu+1}(\kappa\rho) + \alpha\,I_{\nu+1}(\kappa\rho)\big],
\end{equation*}
using $2\nu+1 - \nu = \nu+1$ for the power of $\kappa$. For $d>2$, the denominator has $G^{B} = c_d(\rho^{2-d} - r^{2-d})$ with $c_d = \Gamma(\nu)/(4\pi^{\nu+1})$, hence
$\partial_\rho G^{B} = -2\nu\,c_d\,\rho^{-2\nu-1} = -\Gamma(\nu+1)(2\pi^{\nu+1})^{-1}\rho^{-2\nu-1}$,
where $2\nu\Gamma(\nu) = 2\Gamma(\nu+1)$. Dividing the preceding display by this and collecting the powers of $2$ and $\pi$ gives Eq.~\ref{eq:app-Qgen}. Setting $\rho = r$ makes the bracket $\big[I_\nu K_{\nu+1} + I_{\nu+1}K_\nu\big]/I_\nu$, and the Bessel identity $I_\nu(z)K_{\nu+1}(z) + I_{\nu+1}(z)K_\nu(z) = 1/z$ reduces it to $1/(zI_\nu(z))$, leaving $(z/2)^\nu/(\Gamma(\nu+1)I_\nu(z)) = \gamma_d(z)$. For $d=2$, use $G^B(\rho)=(2\pi)^{-1}\log(r/\rho)$ and $\partial_\rho G^B=-1/(2\pi\rho)$ directly; the same ratio follows. The sphere reduction follows from the standard Wronskian $I_0K_1 + I_1K_0 = 1/z$. We verified Eq.~\ref{eq:app-Qgen} against the tabulated $d=2,3$ forms and the reduction against $\gamma_d$ at $d \in \{2,3,4,5,7,10\}$ to fourteen digits.
\end{proof}

Specializing Eq.~\ref{eq:app-Qgen} to the two dimensions tabulated by \citet{sawhney2023wost},
\begin{align}
Q^{\kappa}_2(\rho,r) &= \Big[K_1(\kappa\rho) + I_1(\kappa\rho)\,\tfrac{K_0(\kappa r)}{I_0(\kappa r)}\Big]\,\kappa\rho,
\label{eq:app-Q2}\\
Q^{\kappa}_3(\rho,r) &= e^{-\kappa\rho}\big(\kappa\rho+1\big) + \big(\kappa\rho\cosh(\kappa\rho) - \sinh(\kappa\rho)\big)\,\tfrac{e^{-\kappa r}}{\sinh(\kappa r)} .
\label{eq:app-Q3}
\end{align}

Table~\ref{tab:app-Q} gives representative values. Because of this factorization, the estimator is inexpensive: the geometric part $P^{\mathbb{R}^d}$ is exactly the signed solid angle subtended by $\partial\mathrm{St}$ at $\mathbf{z}$, so sampling directions uniformly gives unit importance weights, and screening enters only through the scalar $Q^{\kappa}_d$.
 
\begin{table}[!htbp]
\caption{Star-step weight $Q^{\kappa}_3(\rho,r)$ at $r = 1$. The weight is $1$ for a zero-length step and decreases with the realized hit distance $\rho$, reaching $\gamma_3(\kappa r) = \kappa r/\sinh(\kappa r)$ at $\rho = r$. A step cut short by an obstacle loses less weight than a full sphere step.}
\label{tab:app-Q}
\centering
{\small
\setlength{\tabcolsep}{6pt}
\begin{tabular}{lrrrrrrr}
\hline
$\rho/r$ & $0$ & $1/6$ & $1/3$ & $1/2$ & $2/3$ & $5/6$ & $1$ \\
\hline
$\kappa r = 1$ & 1.0000 & 0.9880 & 0.9593 & 0.9232 & 0.8880 & 0.8614 & 0.8509 \\
$\kappa r = 4$ & 1.0000 & 0.8558 & 0.6157 & 0.4086 & 0.2629 & 0.1766 & 0.1466 \\
\hline
\end{tabular}}
\end{table}
 
\subsection{Why a Ball Kernel Is Admissible on a Star Domain}
\label{app:kernels:transfer}

The kernels of Appendices~\ref{app:kernels:free}--\ref{app:kernels:Q} are constructed for free space and for a ball, yet Eq.~\ref{eq:app-admissible} integrates them over $\partial\mathrm{St}$, which is not a sphere. The following identity justifies this.

\begin{proposition}[Admissibility]
\label{prop:app-admissible}
Let $A \subseteq \Omega$ be a bounded Lipschitz domain with $x\in A$, let $\Psi$ solve $\mathcal{L}_\kappa\Psi = 0$ on $A$, and let $G$ be \emph{any} function with $(-\Delta+\kappa^2)G(x,\cdot) = \delta_{x}$ on a neighborhood of $\overline{A}$, so that $x$ is its only singularity in $\overline{A}$. Then with $P := -\partial_n G$ and $n$ the outward normal,
\begin{equation}
\Psi(x) = \int_{\partial A}\Big[P(x,\mathbf{z})\,\Psi(\mathbf{z}) + G(x,\mathbf{z})\,\partial_n\Psi(\mathbf{z})\Big]\mathrm{d}S(\mathbf{z}).
\label{eq:app-admissible}
\end{equation}
\end{proposition}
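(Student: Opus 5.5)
The plan is to apply Green's second identity on $A$ with a small ball around the pole removed, and then let that ball shrink. Fix $x\in A$ and choose $\epsilon>0$ small enough that $\overline{B(x,\epsilon)}\subset A$. Set $A_\epsilon=A\setminus\overline{B(x,\epsilon)}$. On $A_\epsilon$ both $\Psi$ and $G(x,\cdot)$ are smooth and satisfy $(-\Delta+\kappa^2)u=0$: $\Psi$ by hypothesis, and $G$ because its only singularity in $\overline A$ is at $x$. Green's second identity, valid on the Lipschitz domain $A_\epsilon$, gives
\[
\int_{A_\epsilon}\big(G\,\Delta\Psi-\Psi\,\Delta G\big)\,\mathrm d\mathbf y=\int_{\partial A_\epsilon}\big(G\,\partial_n\Psi-\Psi\,\partial_n G\big)\,\mathrm dS .
\]
The left side is zero. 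Indeed, $\Delta\Psi=\kappa^2\Psi$ and $\Delta G=\kappa^2G$ there, so the integrand is $\kappa^2 G\Psi-\kappa^2\Psi G=0$. The screening terms cancel exactly, which is why the identity holds for every $\kappa$.

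The boundary $\partial A_\epsilon$ splits into $\partial A$, with outward normal $n$, and the sphere $\partial B(x,\epsilon)$, whose normal as seen from $A_\epsilon$ points toward $x$. The $\partial A$ part is exactly the right-hand side of Eq.~\ref{eq:app-admissible}, because $-\Psi\,\partial_nG=P\Psi$. So it remains to show that the sphere contribution tends to $-\Psi(x)$ as $\epsilon\to0$.

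This limit is the main step, and I would reduce it to the free-space kernel. Write $G(x,\cdot)=G_{\mathbb R^d}(\|\cdot-x\|)+H$, with $G_{\mathbb R^d}$ from Eq.~\ref{eq:app-free}. Then $(-\Delta+\kappa^2)H=0$ on a neighbourhood of $\overline A$. By elliptic regularity (the singularity $\delta_x$ has been removed), $H$ is smooth near $x$, so its sphere terms are $O(\epsilon^{d-1})\to0$. For the free-space part, use the small-argument asymptotics of $K_\nu$ and $K_{\nu+1}$, with $\log$ in place of the power when $d=2$:
\[
G_{\mathbb R^d}(\epsilon)=O(\epsilon^{2-d})\quad\text{(or }O(|\log\epsilon|)\text{ for }d=2\text{)},\qquad \partial_rG_{\mathbb R^d}(\epsilon)=-\frac{1}{\omega_d\,\epsilon^{d-1}}\big(1+o(1)\big).
\]
For the derivative, use the radial-derivative formula of Appendix~\ref{app:kernels:free} together with $K_{\nu+1}(z)\sim\tfrac12\Gamma(\nu+1)(z/2)^{-\nu-1}$; the prefactors combine to $1/\omega_d$. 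This value is also forced by the unit flux jump.

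With these asymptotics the two sphere terms behave as follows. The term $\int G\,\partial_n\Psi$ is $O(\epsilon^{d-1}\cdot\epsilon^{2-d})=O(\epsilon)$, or $O(\epsilon|\log\epsilon|)$ when $d=2$, since $\nabla\Psi$ is bounded near $x$ by interior regularity; so it vanishes in the limit. In the term $-\int\Psi\,\partial_nG$, the normal points inward, so $\partial_nG=-\partial_rG$ and the integrand is $\Psi\,\partial_rG$. By continuity of $\Psi$ this equals $-\frac{1}{\omega_d\epsilon^{d-1}}\int_{\partial B_\epsilon}\Psi\,\mathrm dS\to-\Psi(x)$.

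Combining the pieces gives $0=\big[\text{RHS of Eq.~\ref{eq:app-admissible}}\big]-\Psi(x)$, which is the claim.

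Two technical points need care. First, Green's identity on a Lipschitz domain requires $\Psi\in C^1(\overline A)$, or at least the trace sense of the normal derivative. I would state this as the regularity implicit in the hypotheses, or approximate $A$ from inside by smooth domains. Second, the argument shows that only the local singular behaviour of $G$ at $x$ matters. This is why a ball kernel, extended to a neighbourhood of the star, remains admissible on $\mathrm{St}(x)$.
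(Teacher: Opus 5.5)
Your proof is correct and follows the same route as the paper: Green's second identity for the pair $(\Psi,G)$ on $A$, with the $\kappa^2\Psi G$ terms cancelling. The paper handles the pole in one formal distributional step, $\int_A\Psi\,\mathcal L_\kappa G=\Psi(x)$; your excision of $B(x,\epsilon)$, the split $G=G_{\mathbb R^d}+H$ with $H$ smooth, and the $K_\nu,K_{\nu+1}$ asymptotics are the standard rigorous justification of that step.
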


\begin{proof}
Green's second identity on $A$ for the pair $(\Psi, G)$ with the operator $\mathcal{L}_\kappa$ gives
\begin{equation*}
\int_A \big(\Psi\,\mathcal{L}_\kappa G - G\,\mathcal{L}_\kappa\Psi\big) = \int_{\partial A}\big(G\,\partial_n\Psi - \Psi\,\partial_n G\big)
\end{equation*}
the zeroth-order terms $\kappa^2\Psi G$ canceling identically between the two products. The second integrand on the left vanishes by hypothesis, and the first collapses to $\Psi(x)$ by the defining property of $G$. Rearranging gives Eq.~\ref{eq:app-admissible}.
\end{proof}

The chain's one-step identity is the case $G=G^{\kappa,B}$, the screened Dirichlet Green's function of the ball $B=B(x,r)$, applied on an admissible star $A=\mathrm{St}(x)$. Since $G^{\kappa,B}$ vanishes on the spherical part of $\partial A$ and $\partial_n\Psi=0$ on the reflecting part, the flux term drops and $\Psi(x)=\int_{\partial A}P^{\kappa,B}(x,\mathbf z)\Psi(\mathbf z)\,\mathrm dS(\mathbf z)$ with $P^{\kappa,B}=-\partial_nG^{\kappa,B}$. At a smooth boundary pole $x\in\Gamma_N$, the same steps on $A$ minus a half-ball of radius $\eta$ about $x$, with $\eta\to0$, give the identity with $\Psi(x)$ replaced by $\tfrac12\Psi(x)$, because the small hemisphere carries half of the singular flux; this is the factor $\alpha$ in Eq.~\ref{eq:kernel}. The direction law is the solid-angle density $P^{B}\,\mathrm dS/\alpha$ with $P^{B}=-\partial_nG^{B}$, and the weight is the flux ratio $Q_d^\kappa=P^{\kappa,B}/P^{B}$ of Eq.~\ref{eq:app-Q-flux}, so $\mathbb E[W_k\Psi(\xi_{k+1})\mid\xi_k=x]=\alpha(x)^{-1}\int_{\partial A}P^{\kappa,B}\Psi\,\mathrm dS=\Psi(x)$, the one-step property of Section~\ref{sec:method:step}. Multiplying by the earlier weights gives the martingale property before $\sigma$; the stopped sequence is constant afterwards.

The hypotheses require neither that $G$ be the Green's function \emph{of} $A$ nor that it satisfy any boundary condition on $\partial A$. All that is needed is a fundamental solution whose singularity inside $A$ is the single pole at $x$. Since $\mathrm{St}(x,r) \subseteq B(x,r)$, the ball kernel $G^{\kappa,B}$ has that property, and so do the free-space kernel and the ball kernel of any radius $r'\geq r$. Each of these kernels gives a valid representation of $\Psi(x)$, so we must choose among them.

The choice is governed by the unknown flux $\partial_n\Psi$ in Eq.~\ref{eq:app-admissible}. On $\partial\mathrm{St}$ that flux is zero on $\partial\mathrm{St}_N\subseteq\Gamma_N$ but unknown on $\partial\mathrm{St}_B\subseteq\partial B(x,r)$, which lies in the interior of $\Omega$ where no boundary condition applies. A usable identity therefore needs a kernel that vanishes on $\partial\mathrm{St}_B$ while remaining finite on $\partial\mathrm{St}_N$, and that is exactly the ball's Dirichlet kernel: $G^{\kappa,B}(x,\cdot)\equiv0$ on $\partial B(x,r)\supseteq\partial\mathrm{St}_B$, and it is finite and nonzero at the interior points constituting $\partial\mathrm{St}_N$. The free-space kernel is admissible but unsuitable here: $G^{\kappa,\mathbb{R}^d}>0$ on $\partial\mathrm{St}_B$, so the unknown flux term does not vanish.

The walk nonetheless samples directions by the \emph{free-space} solid angle. For the Laplace operator the two Poisson kernels agree on all of $\partial\mathrm{St}$, because the boundary corrector $G^{B}-G^{\mathbb{R}^d}$ for a centered pole is the constant radial homogeneous solution and so has zero normal derivative; directional sampling then importance-samples the kernel exactly and the step weight is $1$. Screening replaces that constant by the $I_\nu$ branch of Eq.~\ref{eq:app-radial}, and the resulting $\rho$-dependent corrector is precisely the second term of Eq.~\ref{eq:app-Qgen}. This explains why $Q^\kappa_d$ depends on the star radius $r$ and not only on the hit distance $\rho$: the $\rho$-dependence comes from the free-space fundamental solution, and the $r$-dependence comes from the ball corrector, whose normal derivative is nonzero only when $\kappa>0$.

Proposition~\ref{prop:app-admissible} is exact, so the ball-on-star transfer contributes no error of its own. One further term is specific to the identity: for nonzero Neumann data the second integrand of Eq.~\ref{eq:app-admissible} requires $G^{\kappa,B}$ at interior points of $B$, which is available in closed form but is estimated with a separate sample, since reusing the walk direction gives an estimator that is zero on flat boundaries and high-variance elsewhere~\citep{sawhney2023wost}. With $h\equiv0$ that term drops entirely, which is the case throughout this paper.

\subsection{Dimension-Specific Forms and the Laplace Limit}
\label{app:kernels:cases}
 
\begin{table}[!htbp]
\caption{Screened Poisson kernels by dimension. All entries follow from Eq.~\ref{eq:app-free}, Eq.~\ref{eq:app-ball-general}, and Eqs.~\ref{eq:app-AD}--\ref{eq:app-AN} with $\nu = d/2 - 1$.}
\label{tab:app-kernels}
\centering
{\small
\renewcommand{\arraystretch}{1.9}
\setlength{\tabcolsep}{4pt}
\begin{tabular}{cccc}
\hline
$d$ & $G_{\mathbb{R}^d}(r)$ & $G^D_{B_R}(r)$ & $\gamma_d(z)$ \\
\hline
$2$ & $\dfrac{K_0(\kappa r)}{2\pi}$ & $\dfrac{1}{2\pi}\!\left[K_0(\kappa r) - \dfrac{K_0(\kappa R)}{I_0(\kappa R)}I_0(\kappa r)\right]$ & $\dfrac{1}{I_0(z)}$ \\
$3$ & $\dfrac{e^{-\kappa r}}{4\pi r}$ & $\dfrac{1}{4\pi r}\!\left[e^{-\kappa r} - e^{-\kappa R}\dfrac{\sinh(\kappa r)}{\sinh(\kappa R)}\right]$ & $\dfrac{z}{\sinh z}$ \\
general & $\dfrac{\kappa^{\nu}}{(2\pi)^{d/2}}\dfrac{K_\nu(\kappa r)}{r^{\nu}}$ & $\dfrac{\kappa^{\nu}}{(2\pi)^{d/2} r^{\nu}}\!\left[K_\nu(\kappa r) - \dfrac{K_\nu(\kappa R)}{I_\nu(\kappa R)}I_\nu(\kappa r)\right]$ & $\dfrac{(z/2)^{\nu}}{\Gamma(\nu+1) I_\nu(z)}$ \\
\hline
\end{tabular}}
\end{table}
 
Table~\ref{tab:app-kernels} collects the dimension-specific forms. For $d>2$, $K_\nu(z)\sim2^{\nu-1}\Gamma(\nu)z^{-\nu}$ gives the unscreened limit
\begin{equation*}
G_{\mathbb{R}^d}(r) \;\longrightarrow\; \frac{\Gamma(d/2 - 1)}{4\pi^{d/2}}\, r^{2-d},
\qquad
\gamma_d(\kappa R) \;\longrightarrow\; 1,
\end{equation*}
which is $1/(4\pi r)$ in three dimensions. In two dimensions the free-space Green's function diverges by an additive constant as $\kappa\to0$; subtracting that constant gives $-(2\pi)^{-1}\log r$. In the Dirichlet ball kernel the constants cancel, giving $(2\pi)^{-1}\log(R/r)$. In all dimensions $Q_d^\kappa(\rho,r)\to1$ and $\gamma_d\to1$, recovering the unscreened weights. A purely Neumann unscreened problem needs a separate compatibility and normalization condition.

\section{Proofs}
\label{app:proofs}
 
\subsection{Standing Regularity Assumptions}
\label{app:proofs:reg}
 
\begin{assumption}[Regularity]
\label{ass:reg}
The free domain $\Omega$ is bounded and connected with $C^{2,\alpha}$ boundary, for $0<\alpha<1$. Its Dirichlet and Neumann sets are disjoint unions of boundary components, each with positive surface measure. Hence they are separated by a positive distance. This excludes Dirichlet--Neumann junctions, near which the solution can have singular gradients. The goal data satisfies $g\in C^{2,\alpha}(\Gamma_D)$ and $0<g_{\min}\leq g\leq1$. The cost satisfies $c\in C^\alpha(\overline\Omega)$ and $0<c_{\min}\leq c\leq c_{\max}$. The noise parameter $\lambda>0$ is fixed for each problem.
\end{assumption}
 
Under Assumption~\ref{ass:reg}, the mixed elliptic problem has a unique $\Psi\in C^{2,\alpha}(\overline\Omega)$~\citep{gilbarg2001elliptic}. These regularity statements concern the ideal smooth domain. Voxel edges and corners are not covered by this assumption; the smooth-boundary factor $\alpha=1/2$ and global gradient bounds do not automatically extend to them. 
The exact PDE identities for the chain additionally require constant screening and admissible stars; regularity does not guarantee that the radius-clamped sampler is exact. Write $\mathcal L_\kappa=-\Delta+\kappa^2$.

\subsection{Exact Linearization}
\label{app:proofs:colehopf}

Where $V$ is $C^2$ it satisfies the interior stationary HJB equation
\begin{equation*}
0 = \min_{\mathbf{u}} \Big\{ \nabla V^\top (\mathbf{b} + G\mathbf{u}) + q + \tfrac{1}{2}\mathbf{u}^\top R\,\mathbf{u} \Big\} + \tfrac{1}{2}\operatorname{tr}\!\big(\Sigma \nabla^2 V\big),
\end{equation*}
whose inner minimization is attained at $\mathbf{u}^\star = -R^{-1}G^\top \nabla V$. Substituting that minimizer leaves a quadratic nonlinearity in $\nabla V$,
\begin{equation}
0 = \nabla V^\top \mathbf{b} - \tfrac{1}{2}\nabla V^\top G R^{-1} G^\top \nabla V + q + \tfrac{1}{2}\operatorname{tr}\!\big(\Sigma \nabla^2 V\big),
\label{eq:hjb-sub}
\end{equation}
and this term cancels under the desirability transform.

\begin{lemma}[Cole--Hopf linearization is exact precisely under matching]
\label{lem:colehopf}
Let $V \in C^2$ satisfy Eq.~\ref{eq:hjb-sub} and set $\Psi = e^{-V/\lambda}$. Then
\begin{equation}
\tfrac{1}{2}\operatorname{tr}(\Sigma\nabla^2\Psi) + \mathbf{b}^\top\nabla\Psi - \tfrac{q}{\lambda}\Psi
= \frac{1}{2\Psi}\,\nabla\Psi^\top\Big[\Sigma - \lambda\, GR^{-1}G^\top\Big]\nabla\Psi .
\label{eq:app-ch-residual}
\end{equation}
In particular the right side vanishes for every $\Psi$ with $\nabla\Psi \neq 0$ if and only if $\Sigma = \lambda GR^{-1}G^\top$ as quadratic forms, which is Assumption~\ref{ass:matching}.
\end{lemma}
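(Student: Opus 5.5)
The proof is a direct chain-rule computation that substitutes $V=-\lambda\log\Psi$ into Eq.~\ref{eq:hjb-sub}. Once all terms are expressed in $\Psi$, the only surviving nonlinearity is the quadratic form in $\nabla\Psi$ appearing on the right of Eq.~\ref{eq:app-ch-residual}.

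First, I express the derivatives of $V$ through $\Psi>0$:
\begin{equation*}
\nabla V=-\lambda\frac{\nabla\Psi}{\Psi},\qquad
\nabla^2 V=-\lambda\frac{\nabla^2\Psi}{\Psi}+\lambda\frac{\nabla\Psi\,\nabla\Psi^\top}{\Psi^2}.
\end{equation*}
Substituting these into Eq.~\ref{eq:hjb-sub} yields four contributions:
\begin{itemize}
\item the drift term becomes $-\lambda\,\mathbf b^\top\nabla\Psi/\Psi$;
\item the quadratic control term becomes $-\tfrac{\lambda^2}{2\Psi^2}\nabla\Psi^\top GR^{-1}G^\top\nabla\Psi$;
\item the state cost stays $q$;
\item the trace term becomes $-\tfrac{\lambda}{2\Psi}\operatorname{tr}(\Sigma\nabla^2\Psi)+\tfrac{\lambda}{2\Psi^2}\nabla\Psi^\top\Sigma\nabla\Psi$, using $\operatorname{tr}(\Sigma\,\nabla\Psi\nabla\Psi^\top)=\nabla\Psi^\top\Sigma\nabla\Psi$.
\end{itemize}

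Next, I multiply the resulting identity by $-\Psi/\lambda$. The drift, trace and cost terms then assemble into the left side of Eq.~\ref{eq:app-ch-residual}. The two quadratic pieces combine into $\tfrac{1}{2\Psi}\nabla\Psi^\top[\Sigma-\lambda GR^{-1}G^\top]\nabla\Psi$, after moving them across the equality. The main obstacle is the sign and scaling bookkeeping in this step: the factor $\lambda^2$ from the control term must reduce to a single $\lambda$ after division, and the signs must line up with the stated side of the equation. I would check both explicitly.

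For the characterization, I set $M=\Sigma-\lambda GR^{-1}G^\top$, which is symmetric. If matching holds, $M=0$, so the right side vanishes for every $\Psi$. Conversely, if $\nabla\Psi^\top M\nabla\Psi=0$ for all admissible $\Psi$ with $\nabla\Psi\ne0$, I take $\Psi$ whose gradient at a point $x$ equals an arbitrary vector $v$; for example, $\Psi=\exp(v^\top(\cdot-x))$ locally. This gives $v^\top M(x)v=0$ for all $v$, and symmetry plus polarization then forces $M(x)=0$, i.e.\ Assumption~\ref{ass:matching}. Here I read "every $\Psi$" as ranging over arbitrary positive $C^2$ functions, so the identity is being applied pointwise to the operator rather than only to the particular solution.
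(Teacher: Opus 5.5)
Your proposal is correct and follows essentially the same route as the paper. You substitute $\nabla V=-\lambda\nabla\Psi/\Psi$ and the Hessian formula into Eq.~\ref{eq:hjb-sub}, collect the two quadratic pieces into $\tfrac{\lambda}{2\Psi^2}\nabla\Psi^\top[\Sigma-\lambda GR^{-1}G^\top]\nabla\Psi$, multiply by $-\Psi/\lambda$ (the signs you flagged do line up with the stated identity), and get necessity from positive test fields with arbitrary gradient at a point plus symmetry of the form, including the same operator-level caveat the paper makes.
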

 
\begin{proof}
From $\Psi = e^{-V/\lambda}$ we get $\nabla V = -\lambda\nabla\Psi/\Psi$ and
$\nabla^2 V = -\lambda\frac{\nabla^2\Psi}{\Psi} + \lambda\frac{\nabla\Psi\otimes\nabla\Psi}{\Psi^2}$
Substitute into Eq.~\ref{eq:hjb-sub} term by term. The diffusion term becomes
\begin{equation*}
\tfrac{1}{2}\operatorname{tr}(\Sigma\nabla^2 V) = -\frac{\lambda}{2}\frac{\operatorname{tr}(\Sigma\nabla^2\Psi)}{\Psi} + \frac{\lambda}{2}\frac{\nabla\Psi^\top \Sigma\nabla\Psi}{\Psi^2}
\end{equation*}
and the quadratic control term becomes
\begin{equation*}
-\tfrac{1}{2}\nabla V^\top GR^{-1}G^\top\nabla V = -\frac{\lambda^2}{2}\frac{\nabla\Psi^\top GR^{-1}G^\top \nabla\Psi}{\Psi^2}.
\end{equation*}
Adding the two quadratic contributions gives
\begin{equation}
\frac{\lambda}{2\Psi^2}\,\nabla\Psi^\top\big[\Sigma - \lambda GR^{-1}G^\top\big]\nabla\Psi,
\label{eq:app-cancel}
\end{equation}
which is Eq.~\ref{eq:app-ch-residual} up to the stated factor. The remaining terms are $-\lambda \mathbf{b}^\top\nabla\Psi/\Psi + q$ and $-\tfrac{\lambda}{2}\operatorname{tr}(\Sigma\nabla^2\Psi)/\Psi$; multiplying the whole equation by $-\Psi/\lambda$ yields the stationary backward Kolmogorov equation
\begin{equation}
\tfrac{1}{2}\operatorname{tr}\!\big(\Sigma\nabla^2 \Psi\big) + \mathbf{b}^\top \nabla \Psi - \frac{q}{\lambda}\Psi = 0
\label{eq:kolmogorov}
\end{equation}
exactly when Eq.~\ref{eq:app-cancel} vanishes.
 
For universal cancellation of the transformed differential expression, put $E=\Sigma-\lambda GR^{-1}G^\top$. The symmetric quadratic form $v^\top E v$ vanishes for every $v$ if and only if $E=0$. Smooth positive test fields can have an arbitrary gradient at a point, so matching is necessary for this operator identity. For a particular solution $\Psi$, the residual can vanish even when matching fails.
\end{proof}
 
For the fully actuated reflected model under Assumption~\ref{ass:reg}, let $\Psi$ solve the mixed problem and set $V=-\lambda\log\Psi$ and $\mathbf u^*=-\nabla V$. Assume admissible controls, including $\mathbf u^*$, give well-defined dynamics with finite expected goal-hitting time and control energy. Completing the HJB square and applying It\^o's formula up to $\tau_D\wedge T$, then taking $T\to\infty$, yields
\begin{equation}
J(x,\mathbf u)=V(x)+\frac12\mathbb E_x\!\int_0^{\tau_D}\|\mathbf u_t-\mathbf u^*(X_t)\|^2\,\mathrm dt.
\label{eq:verification}
\end{equation}
The Neumann condition removes the reflection term; bounded $V$ and $V=\phi$ at the goal justify the stopped limit. Thus $J(x,\mathbf u)\geq V(x)$, with equality for $\mathbf u^*$, proving optimality.
 
\subsection{Positivity and Bounded Log-Desirability}
\label{app:proofs:positivity}

Part (ii) below fixes the scale on which $\log\Psi$ varies. For constant $c$ the screening length is $\lambda/c$, but additional length scales arise from the domain size, goal size, and obstacle geometry, so exponential decay with distance holds only asymptotically.

\begin{proposition}[Positivity and range]
\label{prop:app-positivity}
Under Assumption~\ref{ass:reg}:
\begin{enumerate}
\item[(i)] $0<\Psi\leq1$ on $\overline\Omega$, with $\Psi_{\min}=\min\Psi>0$;
\item[(ii)] for the fixed annulus $a<|x|<D$, $0<a<D$, constant $\kappa$, $\Psi(a)=1$, and $\Psi'(D)=0$, the log range is
$-\log\Psi(D)=\kappa(D-a)+O(1)$ as $\kappa\to\infty$.
\end{enumerate}
\end{proposition}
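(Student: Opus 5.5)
For part (i) I will use the strong maximum principle together with the Hopf boundary lemma for the operator $\mathcal L_\kappa=-\Delta+\kappa^2$, where $\kappa^2=c^2/\lambda^2\ge c_{\min}^2/\lambda^2>0$. I take as given the existence of a unique $\Psi\in C^{2,\alpha}(\overline\Omega)$, which Assumption~\ref{ass:reg} provides.

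\emph{Upper bound.} Suppose $\max_{\overline\Omega}\Psi>1$. The maximum cannot lie on $\Gamma_D$, because there $\Psi=g\le1$. If it lies at an interior point $x_0$, then $\Delta\Psi(x_0)\le0$, while the equation gives $\Delta\Psi(x_0)=\kappa^2\Psi(x_0)>0$, a contradiction. If it lies on $\Gamma_N$, the strong maximum principle (valid since $c\ge0$ and the maximum is positive) says $\Psi$ is either constant or satisfies $\partial_n\Psi>0$ there by Hopf; this needs the interior sphere condition, which $C^{2,\alpha}$ supplies. The Hopf alternative contradicts $\partial_n\Psi=0$. A constant $\Psi\equiv M>0$ would force $\kappa^2M=0$, which is also impossible. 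Hence $\Psi\le1$.

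\emph{Lower bound.} The same argument applied to $-\Psi$ rules out a nonpositive minimum. At an interior minimum $x_0$ with $\Psi(x_0)\le0$ we get $\Delta\Psi(x_0)=\kappa^2\Psi(x_0)\le0$, so I invoke the strong principle for $-\Psi$: a nonpositive minimum forces either constancy or a strict Hopf sign on $\Gamma_N$, and on $\Gamma_D$ we have $\Psi=g\ge g_{\min}>0$. I will do a short case check, including the equality case $\Psi(x_0)=0$, where the strong maximum principle for $c\ge0$ still applies at a zero minimum. This gives $\Psi>0$ on $\overline\Omega$. Compactness of $\overline\Omega$ and continuity then yield $\Psi_{\min}=\min\Psi>0$.

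For part (ii) I solve the annulus problem explicitly using the radial basis of Eq.~\ref{eq:app-radial}:
\[
\Psi(r)=r^{-\nu}\bigl[A I_\nu(\kappa r)+B K_\nu(\kappa r)\bigr].
\]
The scaled-derivative identities Eq.~\ref{eq:app-scaled} turn the Neumann condition $\Psi'(D)=0$ into
\[
A I_{\nu+1}(\kappa D)=B K_{\nu+1}(\kappa D).
\]
Normalizing with $\Psi(a)=1$ then gives the ratio
\[
\frac{\Psi(D)}{\Psi(a)}=\Bigl(\frac aD\Bigr)^{\nu}\frac{K_{\nu+1}(\kappa D)I_\nu(\kappa D)+I_{\nu+1}(\kappa D)K_\nu(\kappa D)}{K_{\nu+1}(\kappa D)I_\nu(\kappa a)+I_{\nu+1}(\kappa D)K_\nu(\kappa a)}.
\]
By the Wronskian identity used in Lemma~\ref{lem:app-Qgen}, the numerator bracket equals $1/(\kappa D)$. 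For the denominator I use the large-argument asymptotics
\[
I_\mu(z)\sim\frac{e^z}{\sqrt{2\pi z}},\qquad K_\mu(z)\sim\sqrt{\frac{\pi}{2z}}\,e^{-z}.
\]
The term $I_{\nu+1}(\kappa D)K_\nu(\kappa a)$ is of size $e^{\kappa(D-a)}/(2\kappa\sqrt{aD})$. The competing term $K_{\nu+1}(\kappa D)I_\nu(\kappa a)$ is of order $e^{-\kappa(D-a)}/\kappa$ and so is negligible. Taking $-\log$, the $1/\kappa$ prefactors cancel and I obtain
\[
-\log\Psi(D)=\kappa(D-a)+\log\frac{D^{\nu}\sqrt{aD}\cdot 2}{a^{\nu}D}+o(1),
\]
which is $\kappa(D-a)+O(1)$.

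The main obstacle is the careful bookkeeping in (i) at Neumann boundary points. Hopf's lemma must be applied to the correct sign for each case, and the Dirichlet–Neumann separation in Assumption~\ref{ass:reg} is what ensures every boundary point is either purely Dirichlet or purely Neumann with the interior sphere condition. Part (ii) is routine once the Wronskian cancellation is observed.
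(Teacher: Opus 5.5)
Your proof follows the paper's route: maximum principle plus Hopf lemma for (i), with the constant case excluded by the positive Dirichlet data. For (ii) you use the explicit $r^{-\nu}I_\nu$, $r^{-\nu}K_\nu$ solution with large-argument Bessel asymptotics, and using the Wronskian to collapse the numerator to $1/(\kappa D)$ is a tidy way to organize that computation. Two small repairs are needed in (ii).

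First, you evaluate $-\log\Psi(D)$ but never show that it is the log range. Part (i) gives $\Psi\le 1=\Psi(a)$, but the maximum principle does not place the minimum at $r=D$, since positive interior minima are not excluded. So you need $\Psi'<0$ on $(a,D)$. The paper obtains this by integrating $(r^{d-1}\Psi')'=\kappa^2r^{d-1}\Psi>0$ backward from $\Psi'(D)=0$. It also follows from your formula: $\Psi'(r)$ is a positive multiple of $K_{\nu+1}(\kappa D)I_{\nu+1}(\kappa r)-I_{\nu+1}(\kappa D)K_{\nu+1}(\kappa r)$, which is negative for $r<D$ because $I_{\nu+1}$ increases and $K_{\nu+1}$ decreases.

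Second, your constant is inverted. Your own asymptotics give $\Psi(D)=2(a/D)^{(d-1)/2}e^{-\kappa(D-a)}(1+o(1))$, in agreement with the paper. So the bounded term is $\log\bigl(\tfrac12(D/a)^{(d-1)/2}\bigr)$ rather than $\log\bigl(2(D/a)^{\nu-1/2}\bigr)$; this slip does not affect the $\kappa(D-a)+O(1)$ conclusion.
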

\begin{proof}
For (i), the maximum principle excludes an interior maximum above one or a nonpositive interior minimum. At a reflecting boundary, the Hopf boundary lemma~\citep{gilbarg2001elliptic} excludes either extremum unless the solution is constant. A constant violating the bounds contradicts the positive Dirichlet data. Continuity and compactness then give a strictly positive minimum.

For (ii), put $\nu=d/2-1$. The radial solution is
\begin{equation*}
\Psi(r)=\frac{r^{-\nu}[K_\nu(\kappa r)+A I_\nu(\kappa r)]}
{a^{-\nu}[K_\nu(\kappa a)+A I_\nu(\kappa a)]},\qquad
A=\frac{K_{\nu+1}(\kappa D)}{I_{\nu+1}(\kappa D)}.
\end{equation*}
Integrating $(r^{d-1}\Psi')'=\kappa^2r^{d-1}\Psi$ backward from $\Psi'(D)=0$ shows $\Psi'<0$ for $a<r<D$. Thus the range is $-\log\Psi(D)$. At fixed $a,D$, the large-argument Bessel asymptotics give
$\Psi(D)=2(a/D)^{(d-1)/2}e^{-\kappa(D-a)}(1+O(\kappa^{-1}))$,
which proves (ii).
\end{proof}

\subsection{Why We Use Reflection at Obstacles}
\label{app:proofs:reflection-choice}

Reflection lets the process remain in the free domain and continue toward the goal after reaching an obstacle boundary. Its corresponding condition is $\partial_n\Psi=0$, as in Eq.~\ref{eq:bc} and the reflecting walks of \citet{sawhney2023wost}. This choice also keeps $\log\Psi$, and so the control, finite up to the boundary under Assumption~\ref{ass:reg}.

\paragraph{Zero Dirichlet data produces a logarithmic singularity.}
Suppose instead that obstacles are absorbing, with zero terminal desirability. Denote the resulting solution by $\Psi_{\mathrm{abs}}$, retaining positive goal data and reflection at the outer workspace boundary. Then
\begin{equation*}
\Psi_{\mathrm{abs}}=0\text{ on }\partial\mathcal S,
\qquad
\log\Psi_{\mathrm{abs}}(x)\to-\infty,
\qquad
V_{\mathrm{abs}}(x)=-\lambda\log\Psi_{\mathrm{abs}}(x)\to+\infty
\end{equation*}
as $x$ approaches an obstacle from the free domain. Thus zero desirability represents an infinite collision penalty. The singularity is in the logarithmic representation and value function; the linear PDE solution $\Psi_{\mathrm{abs}}$ can remain smooth.

The control also becomes singular at a smooth obstacle surface. With the smoothness and separation assumptions of Assumption~\ref{ass:reg}, now replacing the obstacle condition by zero Dirichlet data, fix $z\in\partial\mathcal S$ and write $x=z-sn(z)$, where $s>0$ measures distance into the free domain. The Hopf boundary lemma gives $a_z=-\partial_n\Psi_{\mathrm{abs}}(z)>0$. Taylor expansion and $\mathbf u^\star=-\nabla V$, for $G=R=I$, yield
\begin{equation*}
\begin{split}
\Psi_{\mathrm{abs}}(z-sn(z))&=a_zs+O(s^2),\\
V_{\mathrm{abs}}(z-sn(z))&=-\lambda\log(a_zs)+O(s),\\
\mathbf u_{\mathrm{abs}}^\star(z-sn(z))&=-\frac{\lambda}{s}n(z)+O(1),
\qquad s\downarrow0.
\end{split}
\end{equation*}
The leading control points away from the obstacle and grows as inverse distance. A bounded log-output model cannot approximate this target uniformly all the way to the obstacle surface. Clipping the desirability to a positive floor removes the divergence numerically but introduces a boundary approximation.

\paragraph{Reflection preserves a bounded target.}
For the reflected problem, Proposition~\ref{prop:app-positivity} gives $\Psi_{\min}>0$ on the closed free domain. Consequently $V$ and $\log\Psi$ are bounded, $\partial_nV=0$, and
$\|\nabla V\|_\infty\leq\lambda\|\nabla\Psi\|_\infty/\Psi_{\min}<\infty$
under the stated regularity assumptions. The reflected dynamics enforce the boundary constraint; the Neumann condition does not impose a positive clearance from obstacles.

We therefore use reflection because it permits continued goal seeking with a bounded log target; other formulations need not use it. Strictly positive Dirichlet obstacle data $\Psi=e^{-\phi_{\mathrm{obs}}/\lambda}$ with finite $\phi_{\mathrm{obs}}$ avoids this logarithmic singularity, but models termination at an obstacle with a finite penalty. The singularity claim applies specifically to zero Dirichlet data for $\Psi$, not to Dirichlet conditions in general.

\subsection{The Chain Kernel}
\label{app:proofs:Q}

Throughout, $G^{B}$ and $G^{\kappa,B}$ denote the unscreened and screened Dirichlet Green's functions of $B=B(x,r)$ with pole at the center; both are radial in $\rho=\|\mathbf y-x\|$. By Lemma~\ref{lem:app-Qgen}, the step weight is the flux ratio
\begin{equation}
Q_d^\kappa(\rho,r)
=\frac{\partial_\rho G^{\kappa,B}}{\partial_\rho G^B}.
\label{eq:app-Q-flux}
\end{equation}
Both Green's functions are radial about the center, so the outward normal derivative at $\mathbf z$ factors as $\partial_nG=(\partial_\rho G)(n_{\mathbf z}\cdot\hat e_\rho)$ and the geometric factor cancels in the ratio, which is why Eq.~\ref{eq:app-Q-flux} is also the ratio of the two Poisson kernels in Eq.~\ref{eq:app-Pfactor}.

For $d\geq2$, $\kappa>0$ and $0<\rho\leq r$ the step weight is bounded on both sides,
\begin{equation}
\gamma_d(\kappa r)\;\leq\;Q_d^\kappa(\rho,r)\;<\;1 ,
\label{eq:weight-bounds}
\end{equation}
with $\gamma_d$ the full-sphere weight of Eq.~\ref{eq:Q-reduces}. By the upper bound, the weighted chain of Section~\ref{sec:method:step} remains bounded; the lower bound gives $Q_d^\kappa\geq\gamma_d(\kappa r_{\max})$ when the star radius is at most $r_{\max}$, which yields the replica bounds used in Appendix~\ref{app:proofs:delta}.
\begin{proof}[Bounds on $Q_d^\kappa$]
Positivity is immediate from Eq.~\ref{eq:app-Qgen}, since $K_{\nu+1}$, $I_{\nu+1}$ and $K_\nu/I_\nu$ are all positive. For the upper bound, set $D := G^{B} - G^{\kappa,B}$, which is radial and vanishes at $\rho = r$. Subtracting the two defining equations cancels the point sources and leaves $-\Delta D = \kappa^2 G^{\kappa,B}$ in $B$, so in flux form, the constant vanishing because the right side is locally integrable at the origin,
\begin{equation*}
-\rho^{d-1}\partial_\rho D = \kappa^2\!\!\int_0^\rho\! t^{d-1}G^{\kappa,B}(t)\,\mathrm{d}t ,
\qquad
-\rho^{d-1}\partial_\rho G^{B} = \frac{1}{\omega_d} .
\end{equation*}
Dividing the first by the second gives the weight-deficit identity
\begin{equation*}
1 - Q^{\kappa}_d(\rho,r) = \frac{\partial_\rho D}{\partial_\rho G^{B}}
= \kappa^2\!\!\int_{B(x,\rho)}\! G^{\kappa,B}(x,\mathbf{y})\,\mathrm{d}\mathbf{y},
\end{equation*}
The integrand is strictly positive on the interior of $B$, so the right side is strictly positive for $\rho>0$ and $Q^{\kappa}_d<1$. For the lower bound, $B(x,\rho)\subseteq B(x,r)$ and the integrand is nonnegative, so the deficit is at most $\kappa^2\!\int_B G^{\kappa,B}$; integrating $(-\Delta+\kappa^2)G^{\kappa,B}=\delta_x$ over $B$ gives $\kappa^2\!\int_B G^{\kappa,B} = 1-\int_{\partial B}P^{\kappa}_{B}\,\mathrm dS = 1-\gamma_d(\kappa r)$ by Proposition~\ref{prop:app-gamma}. Hence $\gamma_d(\kappa r)\leq Q^{\kappa}_d(\rho,r)<1$, with equality on the left at $\rho=r$. This follows directly from $B(x,\rho)\subseteq B(x,r)$; monotonicity in $\rho$ also follows from the same identity but is not needed.
\end{proof}

\subsection{Log-Space Variant}
\label{app:log-variant}

Alternatively, $\log\Psi_\theta$ can be fitted to log labels. There the weights are accumulated as $\Lambda^{(n)}=\sum_{j<\sigma^{(n)}\wedge K}\log W_j^{(n)}$ plus the log of the terminal value, which avoids underflow for long products, but $\log\widehat\Psi$ is a biased estimate of $\log\bar\Psi$, since the logarithm of a sample mean is not the mean of logarithms. The correction below removes that bias to $O(N^{-2})$.

\paragraph{Bias correction.}
For i.i.d. replicas with fixed bounds $0<z_{\min}\leq e^{\Lambda^{(n)}}\leq z_{\max}<\infty$ and variance $s^2$,
\begin{equation*}
\mathbb E[\log\widehat\Psi]
=\log\bar\Psi-\frac{s^2}{2N\bar\Psi^2}+O(N^{-2}).
\end{equation*}
The constants can depend on the replica bounds. For a fixed environment, such bounds follow from a finite horizon and a bounded target log field (Appendix~\ref{app:proofs:delta}).

For $N\geq2$, compute all ratios from shifted exponentials. Set $b=\max_n\Lambda^{(n)}$, $z_n=e^{\Lambda^{(n)}-b}$, $\bar z=N^{-1}\sum_nz_n$, and $s_z^2=(N-1)^{-1}\sum_n(z_n-\bar z)^2$. Then
\begin{equation}
\widehat v=\frac{s_z^2}{N\bar z^2},\qquad
\widehat\Lambda=\log\widehat\Psi+\frac{\widehat v}{2}
=b+\log\bar z+\frac{\widehat v}{2}.
\label{eq:label}
\end{equation}
Here $s_z^2$ is the sample variance of the shifted replicas $z_n$. The random common shift makes those replicas dependent; unbiasedness is invoked only for the unshifted replicas below. Writing $\widehat s^2$ for the unbiased sample variance of the unshifted replicas $e^{\Lambda^{(n)}}$, the ratio is unchanged by the common shift, $\widehat v=s_z^2/(N\bar z^2)=\widehat s^2/(N\widehat\Psi^2)$, so Eq.~\ref{eq:label} adds the bias correction $\widehat s^2/(2N\widehat\Psi^2)$. The correction removes the leading log bias, leaving $O(N^{-2})$ under the stated assumptions.

\subsection{Delta-Method Bias of the Log Label}
\label{app:proofs:delta}

\begin{proposition}[Bias of a log sample mean]
\label{prop:app-delta}
Let $Z_1,\ldots,Z_N$ be i.i.d. with $Z_i\in[z_{\min},z_{\max}]$, $0<z_{\min}\leq z_{\max}<\infty$, mean $\mu$, and variance $s^2$. For $N\geq2$, write $\widehat\Psi=N^{-1}\sum_iZ_i$ and let $\widehat s^2$ be the unbiased sample variance. Then
\begin{equation*}
\mathbb E[\log\widehat\Psi]
=\log\mu-\frac{s^2}{2N\mu^2}+O(N^{-2}),\qquad
\mathbb E\!\left[\log\widehat\Psi+\frac{\widehat s^2}{2N\widehat\Psi^2}\right]
=\log\mu+O(N^{-2}).
\end{equation*}
The constants depend on the replica bounds.
\end{proposition}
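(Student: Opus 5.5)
The plan is a second-order Taylor (delta-method) expansion of $\log$ about $\mu$, with the remainder controlled by the bounded support. Since $Z_i\in[z_{\min},z_{\max}]$, the mean $\widehat\Psi$ lies in the same interval. On that interval $\log$ is smooth with all derivatives bounded by constants depending only on $z_{\min}$.

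Write $\Delta=\widehat\Psi-\mu$ and expand to third order with Lagrange remainder:
\begin{equation*}
\log\widehat\Psi=\log\mu+\frac{\Delta}{\mu}-\frac{\Delta^2}{2\mu^2}+\frac{\Delta^3}{3\mu^3}-\frac{\Delta^4}{4\eta^4},
\end{equation*}
with $\eta$ between $\mu$ and $\widehat\Psi$, hence $\eta\geq z_{\min}$. Taking expectations uses the standard central-moment facts for means of i.i.d. bounded variables:
\begin{itemize}
\item $\mathbb E\Delta=0$;
\item $\mathbb E\Delta^2=s^2/N$;
\item $\mathbb E\Delta^3=\mu_3/N^2$, where $\mu_3$ is the third central moment of $Z_1$;
\item $\mathbb E\Delta^4=3s^4/N^2+O(N^{-3})$.
\end{itemize}
All central moments are bounded by powers of $z_{\max}-z_{\min}$. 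The cubic and quartic terms are therefore $O(N^{-2})$, with constants depending on the replica bounds only through $z_{\min}$ and $z_{\max}$. This gives the first identity.

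For the corrected estimator, it suffices to show
\begin{equation*}
\mathbb E\!\left[\frac{\widehat s^2}{2N\widehat\Psi^2}\right]=\frac{s^2}{2N\mu^2}+O(N^{-2}).
\end{equation*}
Because of the explicit $1/N$ prefactor, we only need $\mathbb E[\widehat s^2/\widehat\Psi^2]=s^2/\mu^2+O(N^{-1})$. Write $\widehat s^2/\widehat\Psi^2=\widehat s^2\,f(\widehat\Psi)$ with $f(t)=t^{-2}$, and expand $f$ to first order about $\mu$:
\begin{equation*}
f(\widehat\Psi)=\mu^{-2}-2\mu^{-3}\Delta+R,\qquad |R|\leq C\Delta^2,
\end{equation*}
where $C$ depends only on $z_{\min}$. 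The three resulting terms are handled as follows.
\begin{itemize}
\item $\mathbb E[\widehat s^2]\mu^{-2}=s^2/\mu^2$ exactly, because $\widehat s^2$ is unbiased.
\item For the cross term, $|\mathbb E[\widehat s^2\Delta]|$ equals $|\mathrm{Cov}(\widehat s^2,\widehat\Psi)|$, since $\mathbb E\Delta=0$. This is $\mu_3/N$, a classical identity; alternatively, bound it by Cauchy--Schwarz using $\mathrm{Var}(\widehat s^2)=O(N^{-1})$ and $\mathrm{Var}(\widehat\Psi)=O(N^{-1})$. Either way it is $O(N^{-1})$.
\item The remainder satisfies $|\mathbb E[\widehat s^2R]|\leq C(z_{\max}-z_{\min})^2\,\mathbb E\Delta^2=O(N^{-1})$, since $\widehat s^2\leq\frac{N}{N-1}(z_{\max}-z_{\min})^2\leq 2(z_{\max}-z_{\min})^2$ for $N\geq2$.
\end{itemize}
Combining the correction term with the first identity cancels the $s^2/(2N\mu^2)$ bias and leaves $O(N^{-2})$.

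The main obstacle is bookkeeping, not a genuine gap: the remainders must be controlled uniformly. That means the constants should depend only on $(z_{\min},z_{\max})$ and not on the law of $Z$, which the bounded support guarantees. The step most worth checking is the fourth-moment bound, $\mathbb E\Delta^4=O(N^{-2})$. I would verify it by expanding $\big(\sum_i(Z_i-\mu)\big)^4$: only the $N$ diagonal terms and the $3N(N-1)$ paired terms survive, and dividing by $N^4$ gives $O(N^{-2})$.
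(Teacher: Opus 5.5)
Your proposal is correct and follows the paper's proof essentially step for step. Both use a third-order Taylor expansion of $\log$ about $\mu$ with a quartic remainder controlled by $\widehat\Psi\geq z_{\min}$, together with $\mathbb E U^3,\mathbb E U^4=O(N^{-2})$, and then a first-order expansion of $t^{-2}$ combined with $\mathbb E\widehat s^2=s^2$, $\mathbb E[\widehat s^2U]=\mu_3/N$, and boundedness of $\widehat s^2$ to obtain $\mathbb E[\widehat s^2/\widehat\Psi^2]=s^2/\mu^2+O(N^{-1})$.
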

\begin{proof}
Set $U=\widehat\Psi-\mu$. Taylor's theorem through degree three gives
$\log(\mu+U)=\log\mu+U/\mu-U^2/(2\mu^2)+U^3/(3\mu^3)+R_4$ with $|R_4|\leq C U^4$ because $\widehat\Psi\geq z_{\min}$. Independence gives $\mathbb E U=0$, $\mathbb E U^2=s^2/N$, $\mathbb E U^3=O(N^{-2})$, and $\mathbb E U^4=O(N^{-2})$, proving the first expansion.

For the correction, expand $(\mu+U)^{-2}=\mu^{-2}-2\mu^{-3}U+R_2$, where $|R_2|\leq C U^2$. Since $\widehat s^2$ is bounded, $\mathbb E[\widehat s^2R_2]=O(N^{-1})$. Also $\mathbb E\widehat s^2=s^2$ and $\mathbb E[\widehat s^2U]=m_3/N$, with $m_3$ the third central moment of $Z_i$. Hence $\mathbb E[\widehat s^2/\widehat\Psi^2]=s^2/\mu^2+O(N^{-1})$, which proves the corrected expansion. Jensen's inequality gives the nonpositive bias of the uncorrected logarithm for every $N$.
\end{proof}
For a fixed environment, horizon $K$, and target bounded by $0<\Psi^-_{\min}\leq\Psi_{\theta^-}\leq\Psi^-_{\max}<\infty$, let $W_{\min}>0$ be a lower bound on the step weight, which exists by Eq.~\ref{eq:weight-bounds} when the star radius is bounded. The walk replicas then lie in $[z_{\min},z_{\max}]$ with
$z_{\min}=W_{\min}^K\min(g_{\min},\Psi^-_{\min})$ and $z_{\max}=\max(1,\Psi^-_{\max})$.
The target bounds hold because the log field is continuous on the compact domain. They are not assumed uniform over all trained parameters or environments. The shifted calculation in Eq.~\ref{eq:label} preserves the variance ratio exactly in real arithmetic and avoids underflow from a common small scale.

\subsection{Bounded Local Search}
\label{app:proofs:grid}

Recall the lattice of Section~\ref{sec:method:local-search}, with vertices $\mathcal X_h$ and edges between axis-adjacent free cells. We take this graph to be finite and $\Psi_\theta$ finite at every vertex.

\paragraph{Reachable endpoint scores.}
The largest desirability reachable in $k$ moves is computed by
\begin{equation}
B_0(x)=\Psi_\theta(x),\qquad
B_{k+1}(x)=\max_{y\in\{x\}\cup\mathcal N(x)}B_k(y).
\label{eq:local-propagation}
\end{equation}
Indeed, $\mathcal R_0(x)=\{x\}$ and
\begin{equation*}
\mathcal R_{k+1}(x)=\mathcal R_k(x)\cup\bigcup_{y\in\mathcal N(x)}\mathcal R_k(y),
\end{equation*}
so induction gives $B_k(x)=\max_{z\in\mathcal R_k(x)}\Psi_\theta(z)$. Equation~\ref{eq:local-move} therefore compares $B_{L-1}$ at candidate neighbors. Moving to that neighbor uses the first of the $L$ moves.

\paragraph{Recovering a route.}
In each propagation layer, the previous score is kept unless a neighbor has a strictly higher score, in which case the corresponding edge is stored. Starting at horizon $L$, follow the stored edges while decreasing the remaining horizon. A layer with no edge is skipped because its maximum was already reachable at a shorter horizon. Induction on the remaining horizon gives a feasible route of at most $L$ edges ending at a cell with score $B_L(x_0)$. Executing a prefix preserves graph feasibility, although the next replan may select a different endpoint. Continuous-motion feasibility additionally requires valid edges; collision-free sampled endpoints are insufficient. The implemented lattice uses cells marked free in the saved occupancy mask, connected through face-adjacent free cells. Continuous segments from off-center states are checked across all traversed voxels. All edges are checked for collisions. For $e=1$, Eq.~\ref{eq:local-move} instead selects a neighbor anew from the fixed $B_{L-1}$ map at each visited state.

\paragraph{Saturation at large depth.}
The reachable sets are nested, hence $B_{k+1}(x)\geq B_k(x)$. On a connected component $C$ of graph diameter $D$, every $k\geq D$ gives $B_k(x)=\max_{z\in C}\Psi_\theta(z)$ for all $x\in C$. Once $L-1\geq D$, the candidate scores in Eq.~\ref{eq:local-move} coincide and direction depends on tie-breaking. For a reconstructed route, maximizing the terminal score likewise does not distinguish the lengths of paths to the same endpoint. Thus increasing depth enlarges the set of candidate endpoints, but does not guarantee a shortest path or a successful rollout.

\section{Supplementary Experiments}
\label{app:experiments}

\subsection{Evaluation and Training Protocol}
\label{app:exp-protocol}

\paragraph{Evaluation scope.} The pretrained 2D application benchmark contains 592 maps (37,888 starts); the 2D generated and 4D manipulator test sets each contain 512 maps (32,768 starts). Every map uses eight goals and every goal uses eight starts, chosen at the centers of eight equal goal-distance quantile bins. Success is evaluated on the matched saved queries, including failed rollouts in the denominator. Field prediction or solve, controller preprocessing, and rollout/search all contribute to reported test-time cost. These costs are amortized over the corresponding starts; they include more than the neural forward time. FEM solutions provide the numerical ground-truth fields for reconstruction-error evaluation.

\paragraph{Execution model for replanning comparisons.}
\label{app:replanning-protocol}
Table~\ref{tab:replanning} compares nominal planners, sampling-based controllers, and PUN at execution-noise levels $\lambda\in\{0.25,1,2,4\}$.
Before boundary handling, the proposed transition is $\widetilde x_{k+1}=x_k+a_k+\sqrt{\lambda\Delta t}\,z_k$, with independent $z_k\sim\mathcal N(0,I_d)$, $\Delta t=1/65{,}536$, and $\|a_k\|_2\leq 1/(8n)$, where $n$ is the largest grid dimension. Each control interval contains eight microsteps. Coordinates occupy a unit square in 2D, a $1\times1\times\tfrac12$ domain in 3D, and normalized joint coordinates in 4D. The noise-increment covariance is $\lambda\Delta t I_d$, consistent with the PDE specialization $\Sigma=\lambda I_d$.
In these replanning evaluations, all compared methods use the same execution and collision-checking rules. Collision checks cover the entire executed motion segment, and an obstacle hit is counted as a rollout failure. Thus, success requires reaching the goal without an obstacle hit or an outer-domain boundary violation. The reflected PDE provides the planning field, whose effectiveness is evaluated here under collision-failure execution.
Joint limits are not modeled in these experiments.
Moves outside the outer workspace or configuration-domain boundary are also counted as rollout failures.
For each environment--goal query at a given noise level, PUN predicts the entire field in a single forward pass and reuses it throughout the rollout. The reported replanning comparisons use positive noise levels, consistent with the $\lambda>0$ scope of the screened-Poisson derivation.
Success requires entry into the saved goal-cell mask before failure. Methods replan from the current noisy state every eight microsteps. Execution also stops on a collision failure, an unavailable route, or the controller's stopping condition.

\paragraph{Baseline configuration.}
The MPPI and PI$^2$-CMA implementations share a horizon of $H=32$ steps, $K=128$ sampled trajectories per optimization iteration, and one iteration per planning call. Both use double precision, componentwise action bounds $[-0.05,0.05]$, initial exploration standard deviation $0.025$, and optimization temperature $\eta=0.5$. This temperature is distinct from the execution-noise parameter $\lambda$. A fresh call initializes every action to $(x_{\mathcal G}-x)/H$, where $x_{\mathcal G}$ is the goal center. Thus each call evaluates $KH=4{,}096$ sampled transitions, followed by a rollout of the selected nominal sequence. The wrapper supports resuming interrupted optimization from saved parameters and random-generator state; ordinary calls initialize a new optimizer rather than carrying its solution between replans.
The configuration uses $(H,K,I)=(32,128,1)$, where $I$ is the iteration count. These factors describe nominal sampled work $IKH$. No validation-based parameter sweep was performed. We use standard values for temperature, exploration scale, action bounds, the objective weights.

A$^\star$ and Dijkstra use standard C++ implementations with face-neighbor connectivity and unit edge costs. Grids are the native 2D maps, $64\times64\times32$ in 3D, and $17^4$ in 4D. A$^\star$ uses Manhattan distance to the goal-mask bounding box; Dijkstra uses zero heuristic. CPU RRT uses OMPL 2.0.1, goal bias 0.05, uniform domain sampling, and maximum extension 0.4 in normalized coordinates. Search stops on a goal solution; there is no fixed sample or tree-size cap.

\paragraph{Planning objective and geometry.}
Both sampling-based optimizers use the following task cost, with stage costs evaluated after each predicted transition:
\begin{align*}
\ell_t &= \mathbf{1}_{\{x_{t+1}\notin\mathcal G\}}
\left(1+10\|x_{t+1}-x_{\mathcal G}\|_2^2+\|u_t\|_2^2\right)+100 I_t,\\
\phi_H &= 100\,\mathbf{1}_{\{x_H\notin\mathcal G\}}\|x_H-x_{\mathcal G}\|_2^2.
\end{align*}
Here $I_t$ flags a proposed segment rejected by the geometry-validity routine when its starting state is not already in the goal. Before calling the geometry step routine, the wrapper replaces rejected actions and actions starting in the goal with zero displacement. This is a penalized planning surrogate; executed collisions are failures under the evaluation protocol above. Goal absorption is delegated to the geometry step routine. After optimization, the wrapper returns the predicted state path, truncated at its first recorded goal state. Execution noise is applied outside the optimization wrapper; its action-exploration distribution should not be identified with the execution-noise distribution.
Geometry routines traverse all crossed voxels, including incident cells at edge/corner crossings, to detect goal entry, obstacle contact, and domain exit. Collision takes precedence over simultaneous goal entry in first-contact scoring. Actions are displacements in the normalized coordinates specified above. Predicted trajectories use deterministic geometry; the outer rollout adds execution noise. Returned paths are subdivided into bounded microsteps.

\paragraph{MPPI implementation.}
We use the \texttt{MPPI} class from \texttt{pytorch\_mppi}\footnote{\url{https://github.com/UM-ARM-Lab/pytorch_mppi}}~\citep{williams2017mppi}, with $\texttt{noise\_sigma}=0.025^2 I_d$, $\texttt{lambda\_}=0.5$, the task costs above, and the stated action bounds. The one call to \texttt{command} refines the nominal sequence at the same state with \texttt{shift\_nominal\_trajectory=False}. We retain the library's MPPI update and control-perturbation cost in addition to the supplied task cost, and generate the returned path from the final nominal action sequence. The implementation uses the original MPPI update and control-perturbation cost.

\paragraph{PI$^2$-CMA implementation.}
Our adaptation of PI$^2$-CMA~\citep{stulp2012pi2cma} represents the action sequence by 16 uniformly spaced knots per coordinate, linearly interpolated to the 32-step horizon with aligned endpoints and clipped to the action bounds. A single parameter vector is sampled for each trajectory from $\mathcal N(\theta,\Sigma)$, with initial covariance $\Sigma=0.025^2 I_{16d}$. For sampled parameters $\theta_k$, let $S_{kt}$ be the remaining task cost from step $t$, including the terminal cost once. We compute
\begin{equation*}
P_{kt}=\frac{\exp[-(S_{kt}-\min_j S_{jt})/\eta]}{\sum_j\exp[-(S_{jt}-\min_l S_{lt})/\eta]},
\qquad
w_k=\sum_{t=0}^{H-1}\frac{H-t}{H(H+1)/2}\,P_{kt},
\end{equation*}
then update $\theta^+=\sum_k w_k\theta_k$ and $\Sigma^+=\sum_k w_k(\theta_k-\theta)(\theta_k-\theta)^\top$, centering the covariance at the old mean. This implements temporal averaging of the per-step updates for exploration that is constant over a trajectory. The implementation symmetrizes the covariance and samples using an eigendecomposition, checking negative eigenvalues against a numerical tolerance before clipping them to zero; it imposes no positive exploration floor. We use a fixed temperature on unnormalized costs, rather than the cost-range normalization used in the original paper's experiments.
The custom wrapper implements knot-based PI$^2$-CMA, without CMA-ES evolution paths, rank-based selection, or step-size adaptation.

\paragraph{Hardware and timing.}
Both CPU and CUDA implementations were evaluated; the CPU version was faster in the implementation comparison. The optimizer's wall-clock timer includes initialization, the one optimization iteration, logging, the final nominal rollout, and conversion of the returned path to host memory. The CUDA implementation also synchronizes after each iteration and before returning. These are per-call measurements; the tables report per-start costs and require the outer execution loop's accounting across planning calls.
Table~\ref{tab:replanning} uses CPU baseline timings in 2D/3D/4D since they are faster than GPU, and GPU PUN timings throughout. Hardware is an Intel Xeon Platinum 8480+ or an NVIDIA B200. CPU software is Python 3.11.11, PyTorch 2.7.1+cpu, and NumPy 1.26.4; GPU software is Python 3.9.21, PyTorch 2.7.1+cu128, and NumPy 2.0.2.

Timing excludes loading and warmup and uses three observations per selected pair, method, and noise level for the benchmark sets. Variability is minimal. Per-start measurements include initialization, all replanning calls, geometry, and execution, including failed rollouts; the number of calls depends on the realized trajectory. GPU work is synchronized at timing boundaries.

PUN timings include fresh field prediction and preparation for each start, with the field reused during replanning.

\paragraph{Statistical reporting for replanning.}
Table~\ref{tab:replanning} uses 500 maps in 2D, 250 in 3D, and 480 in 4D, with three goals per map and one start per goal. Each query uses one seeded execution-noise combination at each noise level; methods share the saved queries and Gaussian noise sequence. Rollouts receive equal weight, which also gives maps equal weight because each contributes three queries. One trained model per dimension is evaluated. We compute 95\% intervals using 50,000 paired map-level bootstrap resamples, retaining all queries within each sampled map. Nondegenerate intervals are the estimate $\pm1.96$ bootstrap standard errors. Runtime intervals use separate samples of maps, respectively.

\begin{table}[!t]
\centering
\caption{95\% confidence intervals for replanning success rates (\%) corresponding to Table~\ref{tab:replanning}.}
\label{tab:replanning-ci}
\label{tab:replanning-success-ci}
{\footnotesize
\setlength{\tabcolsep}{4pt}
\begin{tabular*}{\linewidth}{@{\extracolsep{\fill}}llrrrr@{}}
\hline
Dim. & Method & $\lambda=0.25$ & $\lambda=1$ & $\lambda=2$ & $\lambda=4$ \\
\hline
\multirow{6}{*}{\textbf{2D}} & A* & [55.92, 62.08] & [39.16, 45.50] & [31.90, 38.10] & [20.89, 26.45] \\
 & Dijkstra & [55.81, 62.19] & [39.15, 45.51] & [31.99, 38.01] & [21.08, 26.26] \\
 & RRT & [79.37, 84.63] & [65.68, 71.66] & [49.14, 55.52] & [33.46, 39.88] \\
 & MPPI & [81.70, 86.30] & [57.77, 64.23] & [42.45, 48.89] & [28.82, 34.52] \\
 & PI$^2$-CMA & [75.76, 80.90] & [56.81, 63.19] & [44.76, 51.24] & [31.95, 38.05] \\
 & PUN (zero-shot) & [89.90, 93.44] & [71.27, 76.73] & [59.15, 65.51] & [42.55, 48.79] \\
\hline
\multirow{6}{*}{\textbf{3D}} & A* & [32.37, 40.97] & [19.38, 27.28] & [9.10, 14.90] & [4.04, 7.96] \\
 & Dijkstra & [33.00, 41.66] & [18.98, 26.36] & [9.05, 14.95] & [3.73, 8.27] \\
 & RRT & [48.85, 57.81] & [30.86, 39.80] & [21.28, 29.38] & [8.69, 15.31] \\
 & MPPI & [58.44, 66.90] & [30.43, 38.91] & [14.98, 22.36] & [4.93, 9.73] \\
 & PI$^2$-CMA & [42.92, 51.74] & [18.36, 25.64] & [17.51, 25.15] & [6.07, 11.27] \\
 & PUN (zero-shot) & [89.60, 94.40] & [64.44, 72.90] & [35.62, 44.38] & [12.71, 19.29] \\
\hline
\multirow{6}{*}{\textbf{4D}} & A* & [100.00, 100.00] & [95.39, 97.95] & [69.54, 75.46] & [55.94, 62.40] \\
 & Dijkstra & [100.00, 100.00] & [95.68, 97.66] & [69.66, 75.34] & [56.06, 62.28] \\
 & RRT & [97.53, 99.13] & [94.65, 97.01] & [87.97, 92.03] & [83.82, 87.84] \\
 & MPPI & [93.68, 96.32] & [90.59, 94.41] & [79.15, 84.19] & [68.91, 74.43] \\
 & PI$^2$-CMA & [95.43, 97.91] & [90.70, 94.30] & [84.40, 88.94] & [75.77, 80.89] \\
 & PUN & [100.00, 100.00] & [100.00, 100.00] & [100.00, 100.00] & [93.49, 96.51] \\
\hline
\end{tabular*}
}
\end{table}

\begin{table}[!t]
\centering
\caption{95\% confidence intervals for replanning runtime (ms/start) corresponding to Table~\ref{tab:replanning}.}
\label{tab:replanning-runtime-ci}
{\footnotesize
\setlength{\tabcolsep}{4pt}
\begin{tabular*}{\linewidth}{@{\extracolsep{\fill}}llrrrr@{}}
\hline
Dim. & Method & $\lambda=0.25$ & $\lambda=1$ & $\lambda=2$ & $\lambda=4$ \\
\hline
\multirow{6}{*}{\textbf{2D}} & A* & [24.99, 62.27] & [26.25, 67.93] & [28.10, 74.26] & [29.44, 78.08] \\
 & Dijkstra & [79.37, 230.51] & [88.65, 261.25] & [102.34, 304.48] & [123.40, 356.54] \\
 & RRT & [69.41, 127.49] & [72.10, 135.64] & [78.18, 146.36] & [82.46, 166.96] \\
 & MPPI & [1227.92, 2240.30] & [1257.88, 2288.74] & [1279.80, 2397.66] & [1400.54, 2851.70] \\
 & PI$^2$-CMA & [964.06, 1697.46] & [1053.67, 1916.15] & [1067.48, 1981.92] & [1079.98, 2133.66] \\
 & PUN (zero-shot) & [7.79, 11.31] & [8.80, 13.48] & [10.71, 17.43] & [10.19, 18.83] \\
\hline
\multirow{6}{*}{\textbf{3D}} & A* & [50.74, 96.06] & [34.34, 86.60] & [42.19, 105.51] & [40.32, 90.18] \\
 & Dijkstra & [50.08, 96.84] & [35.29, 90.79] & [43.41, 110.57] & [41.51, 93.95] \\
 & RRT & [155.27, 321.35] & [93.18, 246.62] & [76.54, 169.16] & [94.56, 157.90] \\
 & MPPI & [382.10, 682.52] & [408.91, 746.77] & [433.13, 675.09] & [526.69, 881.93] \\
 & PI$^2$-CMA & [332.42, 572.08] & [305.61, 564.41] & [371.44, 580.52] & [482.06, 731.04] \\
 & PUN (zero-shot) & [14.95, 16.33] & [14.61, 16.07] & [13.26, 15.34] & [13.69, 15.17] \\
\hline
\multirow{6}{*}{\textbf{4D}} & A* & [12.32, 21.30] & [12.25, 21.91] & [35.26, 120.78] & [28.18, 96.70] \\
 & Dijkstra & [12.51, 21.83] & [12.37, 22.31] & [31.21, 102.29] & [27.73, 91.05] \\
 & RRT & [16.26, 32.14] & [16.20, 35.40] & [23.09, 62.91] & [28.09, 48.53] \\
 & MPPI & [332.13, 490.35] & [329.20, 467.16] & [297.34, 431.10] & [300.44, 449.22] \\
 & PI$^2$-CMA & [300.61, 458.97] & [291.21, 424.17] & [284.19, 426.69] & [288.82, 424.62] \\
 & PUN & [10.50, 10.76] & [10.32, 10.70] & [10.37, 10.73] & [11.43, 11.89] \\
\hline
\end{tabular*}
}
\end{table}

\paragraph{Implementation of the WoSt estimator.}
\label{app:wost-implementation}
The experiments use the following estimator and numerical choices.
\begin{enumerate}
\item \emph{Absorption and radius floor.} A walk reaches $\Gamma_D$ only in the limit, so absorption is triggered on the shell $\{d_D<\varepsilon\}$ and the terminal value is $g$ at the nearest point of $\Gamma_D$; $\varepsilon$ is chosen below the tubular-neighborhood radius of $\Gamma_D$ and its distance to $\Gamma_N$. The star radius also carries a floor, $r(x)=\min\{d_D(x),\max(r_{\min},d_{\mathrm{sil}}(x))\}$, so that steps do not become arbitrarily short near a silhouette. The shell costs at most $C_\Psi\varepsilon$ (Eq.~\ref{eq:app-shell-error}); the floor can break star-shapedness, and the error it introduces has unknown sign~\citep{sawhney2023wost}.
\item \emph{Weighted survival.} The solid-angle sampling law in Eq.~\ref{eq:kernel} leaves the scalar screening factor $Q_d^\kappa$ as the per-step weight. We multiply these factors rather than sample killing events or apply Russian roulette. Conditioning out the killing randomness removes that source of variance and avoids zero replicas caused by stochastic killing.
\item \emph{Fixed-horizon continuation.} Walks stop at the prescribed step cap or the goal. Capped paths contribute their accumulated weight times the detached EMA prediction at the endpoint, as in Section~\ref{sec:method:labels}. Goal-reaching paths use the known boundary value. Every requested trajectory remains in the sample mean, including capped paths.
\item \emph{Log accumulation and bias correction.} The trace records the sum of log step weights independently of their floating-point product, so product underflow does not erase the log evidence. The log-label variants use log-sum-exp and the sample-variance correction in Eq.~\ref{eq:label}. The direct-label variants use the weighted arithmetic mean without this correction. Direct and log targets are constructed from the same sampled paths.
\item \emph{Reuse of model-independent walks.} Cached endpoints, weights, and completion states are independent of the neural parameters. The EMA continuation is reevaluated during training rather than frozen into the cache.
\item \emph{Stable screening kernels.} The voxel backend for dimensions three through seven evaluates the full-sphere factor using the positive series
\begin{equation*}
\gamma_d(z)=\left[\sum_{j=0}^{\infty}a_j\right]^{-1},
\qquad a_0=1,\qquad
 a_j=a_{j-1}\frac{z^2}{4j(\nu+j)},\qquad \nu=\frac d2-1.
\end{equation*}
The CUDA implementation sums in double precision and uses a logarithmic large-argument asymptotic branch. For interior reflecting hits, it evaluates Eq.~\ref{eq:app-Qgen} with $e^{-z}I_\nu(z)$ and $e^zK_\nu(z)$, combining nonnegative terms with an exponential factor no greater than one. With $t=\kappa\rho$, $T=\kappa r$, and $\delta=T-t$, the near-sphere branch uses
\begin{equation*}
Q_d^\kappa(\rho,r)\approx\gamma_d(T)
\left[1+\frac{\delta^2}{2}-\frac{d-1}{6T}\delta^3\right].
\end{equation*}
This reproduces the sphere endpoint without two separately rounded Bessel evaluations. The CUDA branches use this expansion for $\delta\leq2\cdot10^{-6}(1+T)$ and the limiting value one for $t\leq10^{-7}$ or $T=0$.
\item \emph{Voxel boundary queries.} The same backend uses bounding-volume hierarchies for goal faces and view-dependent reflecting silhouettes, and voxel traversal for first boundary hits. This avoids triangulating the voxel boundary. Grid-plane and corner handling preserves the appropriate incident free cell and resolves boundary hits before accepting a sphere endpoint. These geometric operations implement the first-hit chain in Eq.~\ref{eq:chain}.
\end{enumerate}

\paragraph{Shell error.}
Let $\sigma_\varepsilon=\inf\{k:d_D(\xi_k)<\varepsilon\}$ and $\tau=\sigma_\varepsilon\wedge K$ for finite $K$. Assume admissible active steps, the smooth-domain regularity assumptions, and an $\varepsilon$ below the goal's tubular-neighborhood radius and its separation from reflecting boundaries. Define $\Psi_{\varepsilon,K}(x)$ as the expected product of weights through $\tau$ times $g(\pi_D\xi_\tau)$ if $\sigma_\varepsilon\leq K$, or the true continuation $\Psi(\xi_K)$ otherwise. Then
\begin{equation}
\|\Psi_{\varepsilon,K}-\Psi\|_\infty\leq C_\Psi\varepsilon,
\qquad C_\Psi=\sup_{\overline\Omega}\|\nabla\Psi\| .
\label{eq:app-shell-error}
\end{equation}
To prove this, bounded stopping gives $\Psi(x)=\mathbb E_x[(\prod_{j<\tau}W_j)\Psi(\xi_\tau)]$. Only the shell event changes the terminal value. Since $g(\pi_D\xi_\tau)=\Psi(\pi_D\xi_\tau)$, the projection segment lies in the closed free domain, and its length is below $\varepsilon$, the terminal difference is bounded by $C_\Psi\varepsilon$. The product of weights is at most one. This proves the bound without an infinite-walk termination assumption. If $\sigma_\varepsilon<\infty$ almost surely for each start, bounded convergence also yields the same bound for the uncapped shell estimator. Replacing true continuation by a fixed bounded target adds its discounted endpoint error on $\{\sigma_\varepsilon>K\}$, as described in Section~\ref{sec:method:labels}. The radius floor is not covered: a clamped step can violate admissibility and introduce an error of unknown sign~\citep{sawhney2023wost}.

Antithetic directions, first-step gradient control variates, weight-floor stopping, expected-clock stopping, and randomized horizons are disabled in the reported pretrained models. In particular, the full-sphere gradient control variate is not transferred to arbitrary WoSt stars.

\paragraph{Backbone compute policy.} FNO uses 200 epochs. Each alternative backbone has a maximum of 200 epochs and a registered training-compute allowance of 66,856.484 seconds; an epoch is not started if it is not expected to fit within the remaining allowance. Each completed epoch uses 64 training maps and eight goals per map. The 200-epoch cosine schedule is retained when a run stops at the compute cap. The allowance was derived from 1.5 times an estimated individual FNO training cost; shared work in the reference run means it is not an exact equal-GPU-time comparison. Completed training epochs were 200 for FNO, 200 for U-NO, 185 for U-Net, 13 for GINO, 200 for LocalNO, 200 for DISCO, 146 for UPT, and 134 for Transolver. Our overall method is architecture-agnostic; we chose FNO because it gives the highest planning success among the tested implementations under this compute policy, but our overall method is compatible with other neural-operator architectures. Planning benefits from balancing sensitivity to local obstacle geometry with a global view of the environment.

\paragraph{Supervision parameterization.} The primary WoSt rows fit direct desirability with value, goal, and gradient loss terms. The loss ablations use 200 epochs and remove one term at a time. 
 All predictions are evaluated through the stated controller; a small field error does not guarantee path success.

\paragraph{Local-search implementation.} Unless a depth is stated, Local search denotes Local search (10). The finite-lattice controllers use legal axis-adjacent moves, with cycle detection and a step cap. \emph{Greedy} uses $(L,e)=(1,1)$ and requires strict improvement. \emph{Local search} (4) uses $(L,e)=(5,1)$ in 2D, 3D, and 4D. \emph{Local search} (10) uses $(L,e)=(10,3)$ with route reconstruction in 2D, and $(L,e)=(11,1)$ in 3D and 4D. In the single-move variants, four or ten propagation steps are counted after the candidate first move, explaining the extra edge in $L$. Direct models use their raw finite desirability predictions, without clipping or enforcing a goal score during search. For the convention $\ell_\theta=\log\Psi_\theta$ in Appendix~\ref{app:log-variant}, the maximizing search score is $\ell_\theta$. If a model instead predicts $-\log\Psi_\theta$ or $V_\theta$, its output must be negated to preserve this ranking (for $\lambda>0$ in the latter case). The implemented log-label model predicts $\ell_\theta=-\operatorname{softplus}(z_\theta)$, reconstructs desirability as $\exp(\ell_\theta)$, and uses $-z_\theta$ as its search score. Maximizing $-z_\theta$ gives the same ranking as maximizing $\ell_\theta$, consistent with this convention. Success requires entering the saved goal mask. Ties in the single-move rule favor the candidate's own field score, then fewer previous visits, then a fixed lexicographic coordinate order (ascending in 2D, descending in 3D/4D). Route reconstruction retains an existing choice on equal scores, using a fixed axis traversal order for competing improvements. The 2D ten-step controller stops when the best endpoint does not improve the current score by more than $10^{-12}$. Missing legal moves, a fifth visit to a cell under local search, or exhaustion of the rollout budget are recorded as failures. The local diagrams below use one stored field to display these rules, with no new benchmark outcomes inferred from the illustration.

\noindent Details of the walk-horizon study and qualitative resolution-transfer examples are provided below.
\
\subsection{Walk Horizon and Label Budget}
\label{sec:exp:labels}

\paragraph{Label budget and planning success.} The controlled horizon study in Table~\ref{tab:walk-cap-main} uses 16 training maps, eight goals per map, four trajectories per query, and 40 epochs, with evaluation on the generated 2D test set. With the intermediate 100- and 400-step settings, gradient / local-search (10) success is 40.55\% / 66.80\% and 38.06\% / 69.69\%, respectively. Local-search success is highest at 400 steps (69.69\%) and 0.03 percentage points lower at 800 steps, whereas gradient success is highest at 100 steps. A separate study of budget allocation without pretraining is reported in Appendix~\ref{app:offline-experiments}.

\FloatBarrier

\begin{table}[!htbp]
\centering
\caption{FNO (WoSt) planning success (\%) with gradient following and local search.}
\label{tab:controller-dimensions}
\begin{tabular}{lrr}
\hline
Test set & Gradient & Local search (10) \\
\hline
3D application benchmarks & 65.95 & 89.46 \\
4D manipulators & 97.50 & 99.48 \\
\hline
\end{tabular}
\end{table}

\begin{table}[!htbp]
\centering
\caption{Full test-time cost (ms/start) on the 2D zero-shot application-inference subset and 4D manipulator test set. Both methods use local search in 2D and in 4D.}
\label{tab:infer-compact}
\label{tab:local-search-supplement}
\begin{tabular}{lrr}
\hline
Method & 2D zero-shot & 4D manipulators \\
\hline
WoSt + FNO & 1.20 & 2.79 \\
Direct WoSt & 1128.60 & 160.59 \\
\hline
\end{tabular}
\end{table}

\subsection{Resolution Generalization}
\label{sec:exp:resolution}

Figure~\ref{fig:resolution-main} shows FNO predictions on held-out maps at $128\times128$, $200\times200$, and $256\times256$. Each field supports paths from multiple starts to one goal. These examples illustrate field reuse at different discretizations; the aggregate comparison across architectures and larger grids is Fig.~\ref{fig:main-collage}(b).

\begin{figure}[!htb]
\centering
\includegraphics[width=1.0\linewidth]{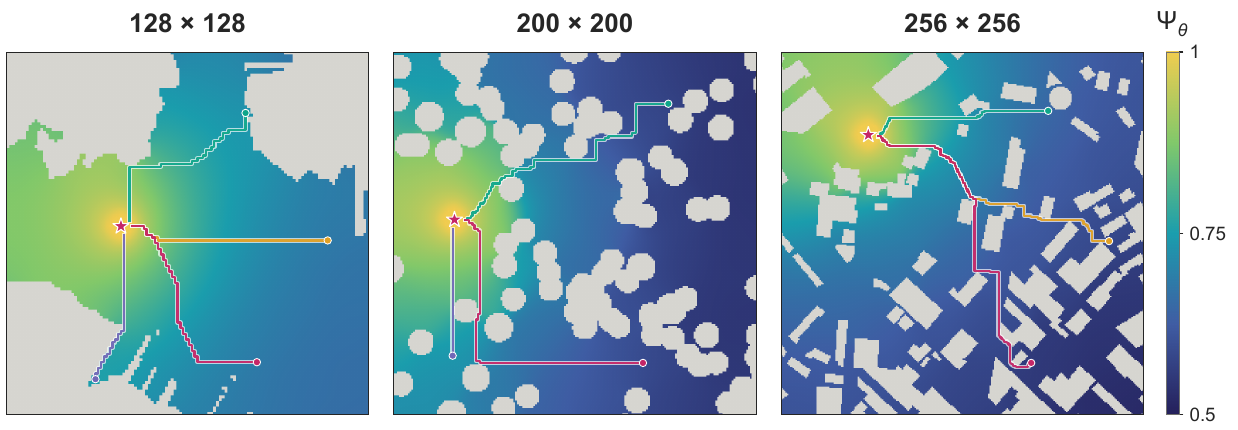}
\caption{Zero-shot FNO field predictions and extracted paths at $128\times128$, $200\times200$, and $256\times256$ (left to right). Color shows predicted desirability $\Psi_\theta$, gray denotes obstacles, circles mark starts, and the star marks the shared goal in each map.}
\label{fig:resolution-main}
\end{figure}

\section{Benchmarks}
\label{app:benchmark-names}

\paragraph{2D (zero-shot; 592 maps).}
AI4Arctic Sea Ice (Arctic sea-ice maps from satellite observations); Fields2Benchmark (agricultural field geometries); HiRISE DTM (terrain models of the surface of Mars); HydroLAKES ASV (lake-shoreline geometries for autonomous surface vessels); LOLA LDEM (lunar elevation maps from laser altimetry); MovingAI BG512 (Baldur's Gate maps); MovingAI Street (urban street-map grids); NOAA ENC Harbor (harbor geometries from navigational charts); Open Forest Observatory (forest structure maps from drone imagery); PNO2D (synthetic obstacle environments); TartanDrive 2 (terrain from off-road driving data).

\paragraph{3D (zero-shot; 278 maps).}
Dryad SIC Foam (porous foam geometries); Falkor MAR AUV (Mid-Atlantic Ridge seafloor for autonomous underwater vehicles); Flybo UAV (scenes for aerial-robot navigation); HVaCT Liver Tip (liver and vessel anatomy for navigation); Industrial Pipe Routing (industrial layouts for pipe routing); Lung Needle (lung anatomy for needle-path planning); MOSAiC Under-Ice AUV (Arctic under-ice geometry for autonomous underwater vehicles); PNO3D iGibson/HouseExpo (indoor scene and floor-plan geometries); ReplicaCAD Indoor UAV (indoor CAD scenes for aerial navigation); Trabecular Bone (porous internal bone structure); Underwater AT40-02 MAR (seafloor surveyed for the Mid-Atlantic Ridge); Axial North Rift (seafloor terrain at Axial Seamount's north rift); Escanaba (seafloor terrain in Escanaba Trough); Loihi Pele's Pit (a submarine volcanic crater); Squid5 EDR (Eastern Dry Rocks coral reef topography from underwater imagery); USGS Tahoe (Lake Tahoe lakebed topography).

\paragraph{4D (512 maps).}
eCobra600 (Omron industrial robot arm); Kinova MICO (robot arm); OpenManipulator (ROBOTIS open-source robot arm); qArm (Quanser robot arm). Each 4D state represents a manipulator configuration in joint coordinates.

\section{Additional Results}
\label{app:additional}

\subsection{Label-Budget Allocation without Pretraining}
\label{app:offline-experiments}

\paragraph{Allocation of a fixed label budget.} Shifting from 64 walks per query and 8 goals per map to 4 walks and 128 goals reduces 2D test success from 62.6\% to 51.5\% for DISCO and from 61.6\% to 56.3\% for FNO. In 3D, the same shift raises FNO success from 25.0\% to 28.8\%; Transolver success changes from 6.3\% to 7.0\%, while label-generation cost rises from 244.35 to 621.46 GPU hours. Equal nominal label budgets therefore need not imply equal computation or a common preferred allocation.

\section{Qualitative Examples and Figures}
\label{app:visuals}
\makeatletter\setlength{\@fptop}{0pt}\makeatother

The following examples illustrate the geometry-to-field mapping, stochastic labels, and path extraction. The sampling budget for each field and walk figure is indicated in its panel.

\subsection{Geometry, Fields, and Multiple Starts}

Figure~\ref{fig:field-demo} illustrates the reuse of a goal-conditioned field across multiple initial states: the field represents a map and goal, while path extraction starts from each selected location.
\begin{figure}[H]
\centering
\includegraphics[width=.93\linewidth]{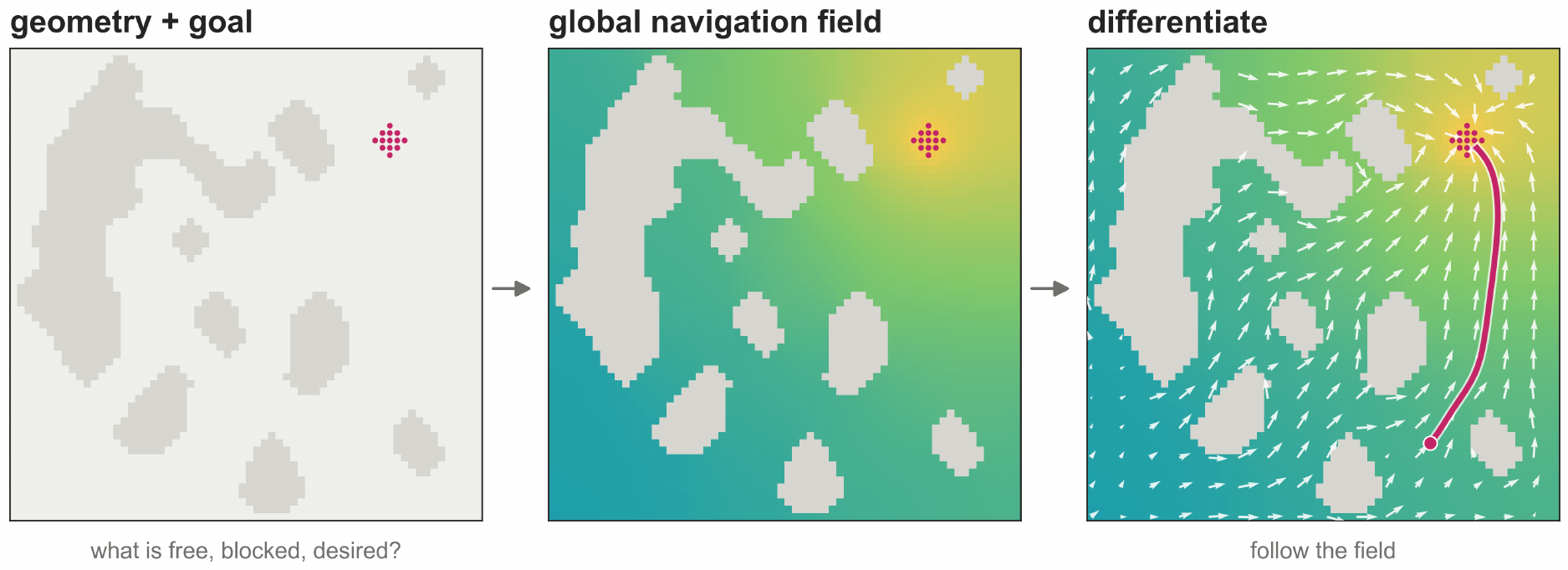}\par\smallskip
\includegraphics[width=.93\linewidth]{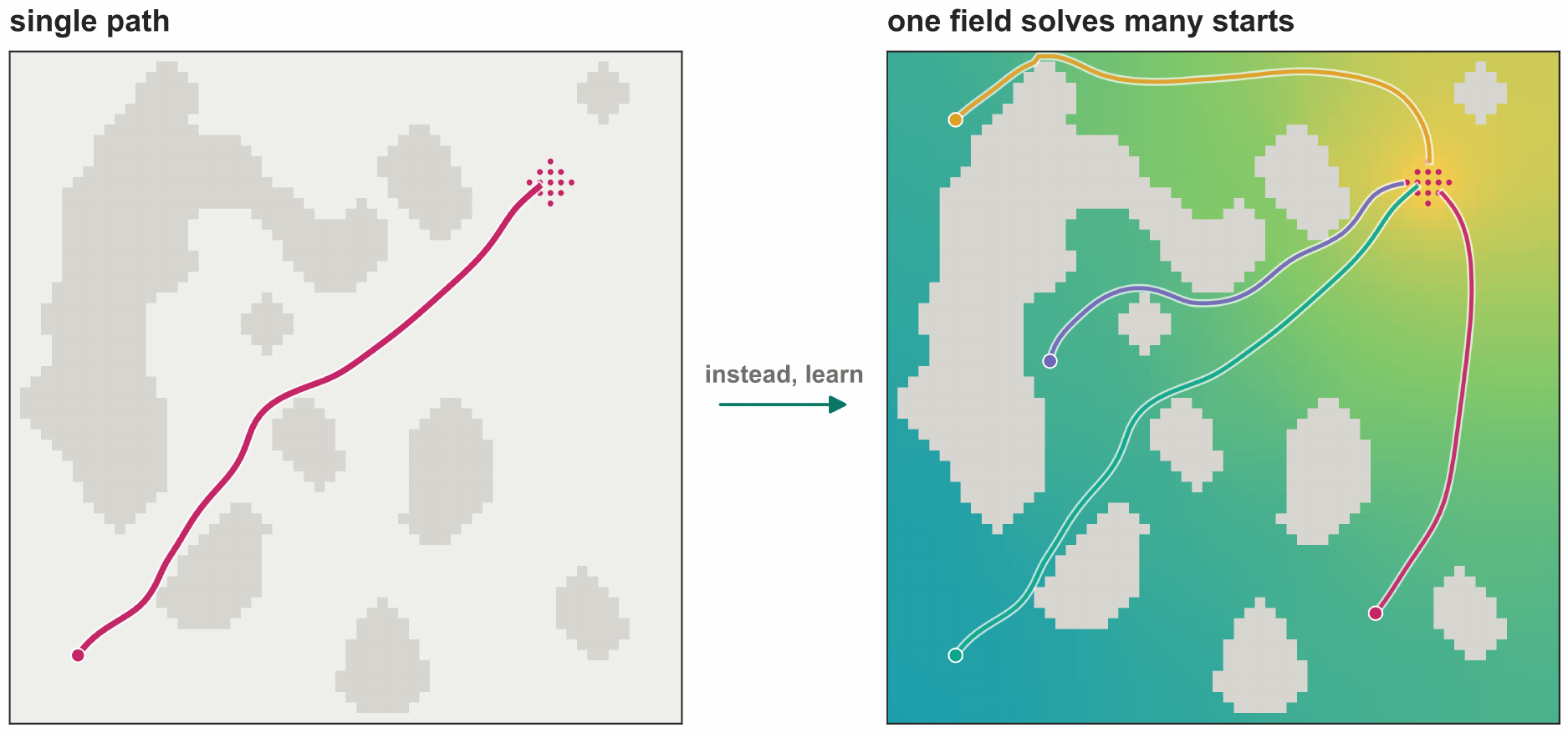}\par\smallskip
\caption{Goal-conditioned field prediction and reuse across initial states.}
\label{fig:field-demo}
\end{figure}
\newpage
\subsection{Walk Geometry and Label Variability}
\label{app:visual-labels}

Figure~\ref{fig:walk-variability} brings together walk geometry, label variability, and spatial sampling cost.

\begin{figure}[!htbp]
\centering
\begin{minipage}[t]{0.48\linewidth}
\centering
\textbf{(a) Walk geometry and boundary treatment}\par
\includegraphics[width=\linewidth]{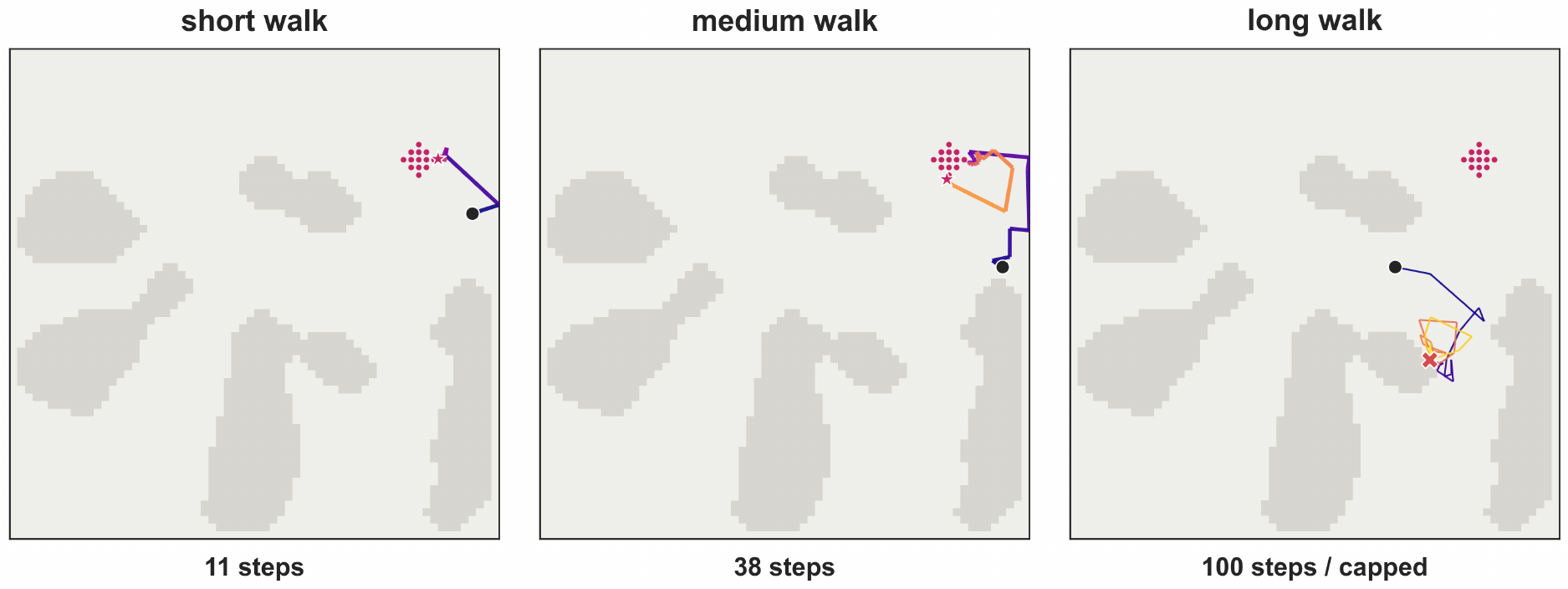}\par
\includegraphics[width=\linewidth]{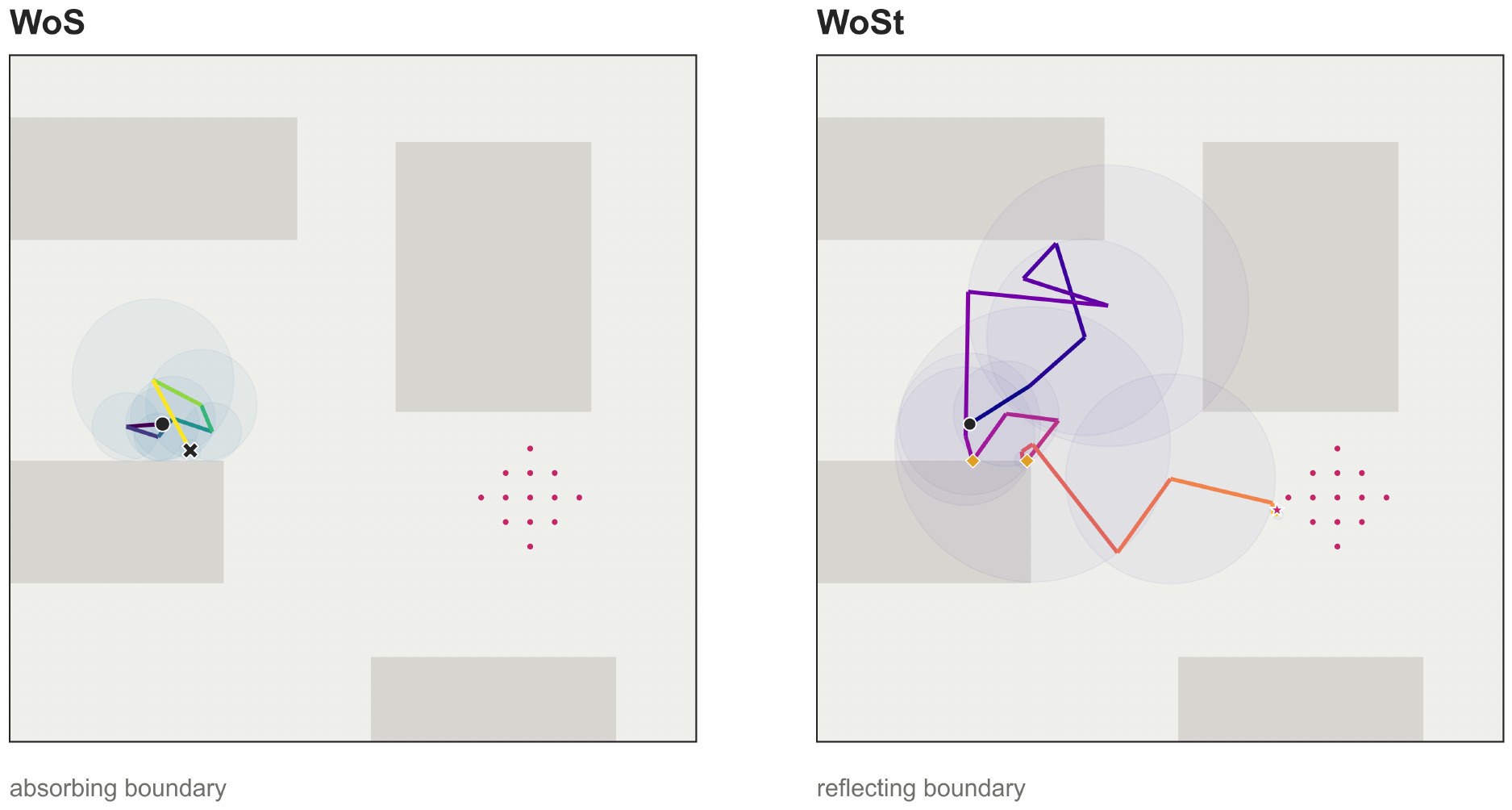}\par
\textbf{(c) Mean walk length}\par
\includegraphics[width=\linewidth]{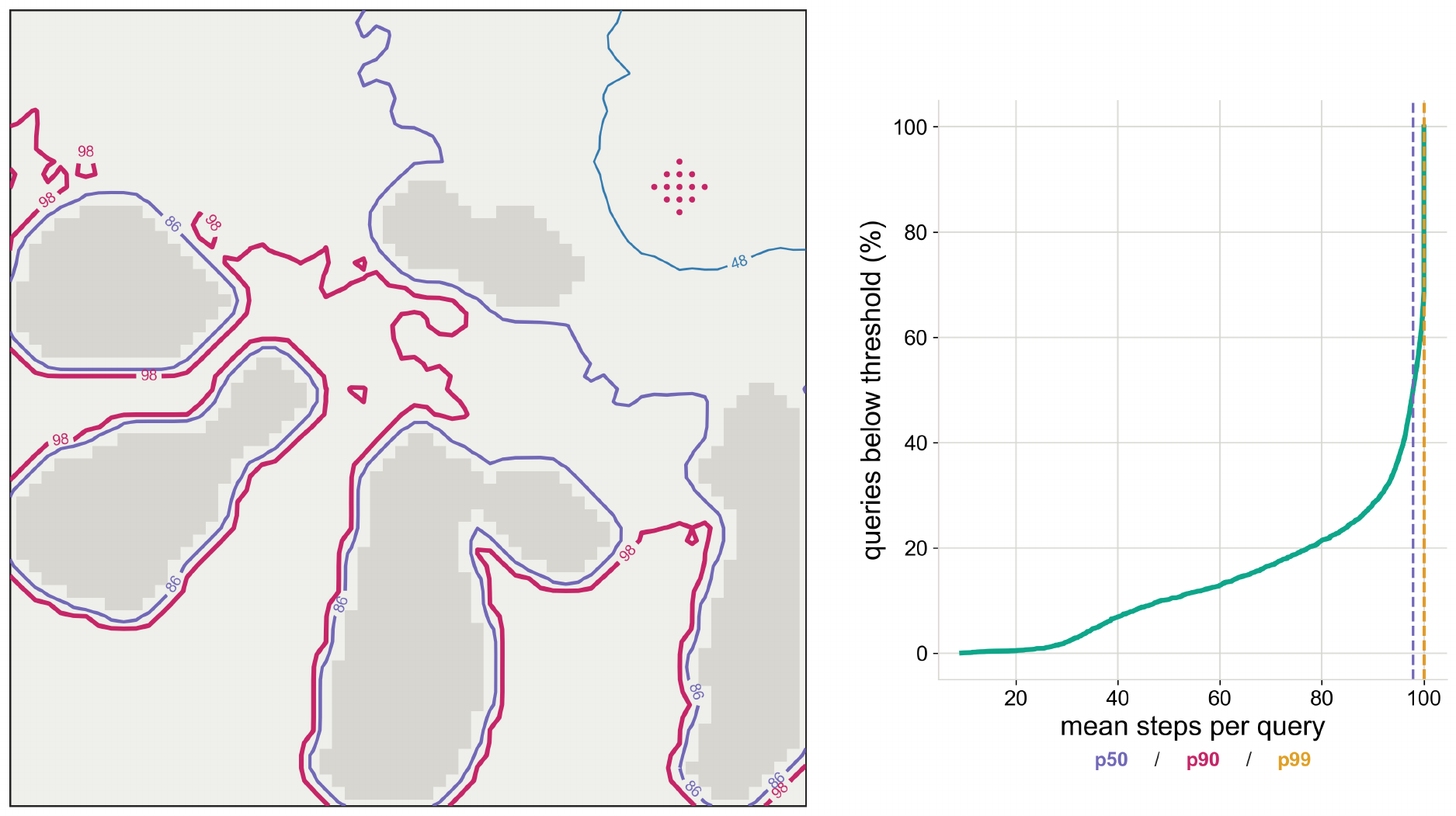}
\end{minipage}\hfill
\begin{minipage}[t]{0.48\linewidth}
\centering
\textbf{(b) Label and gradient variability}\par
\includegraphics[width=\linewidth]{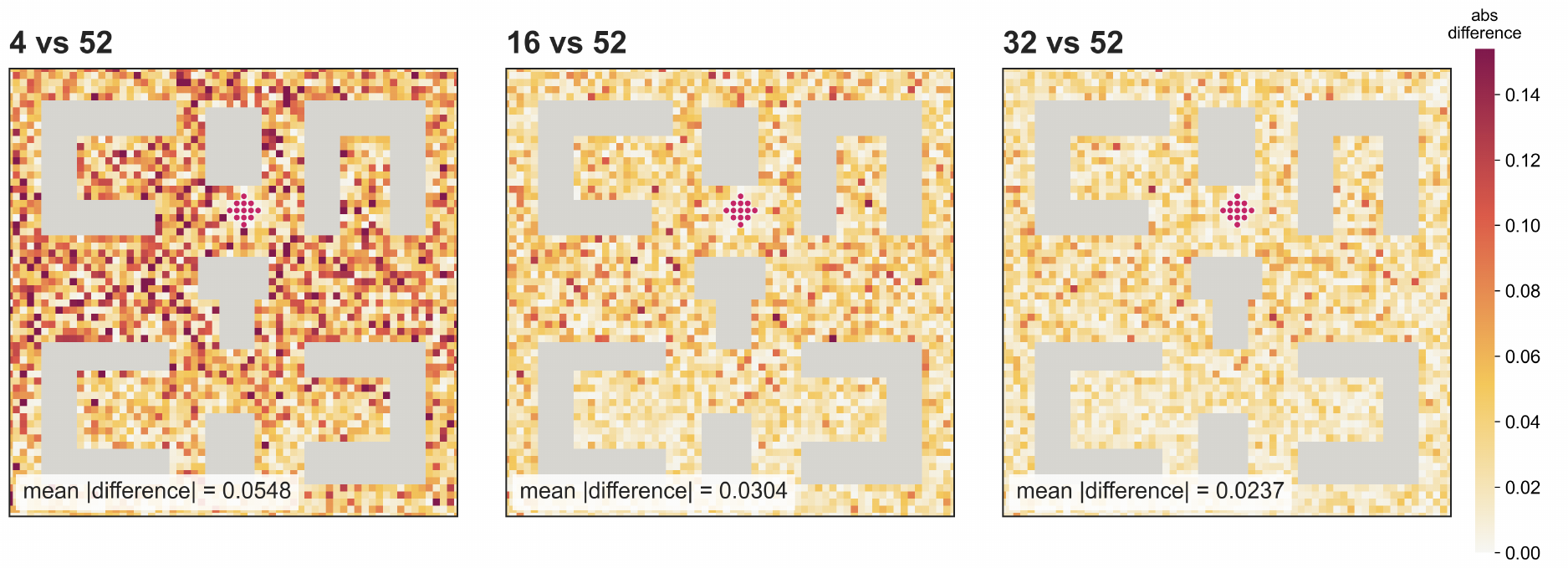}\par
\includegraphics[width=\linewidth]{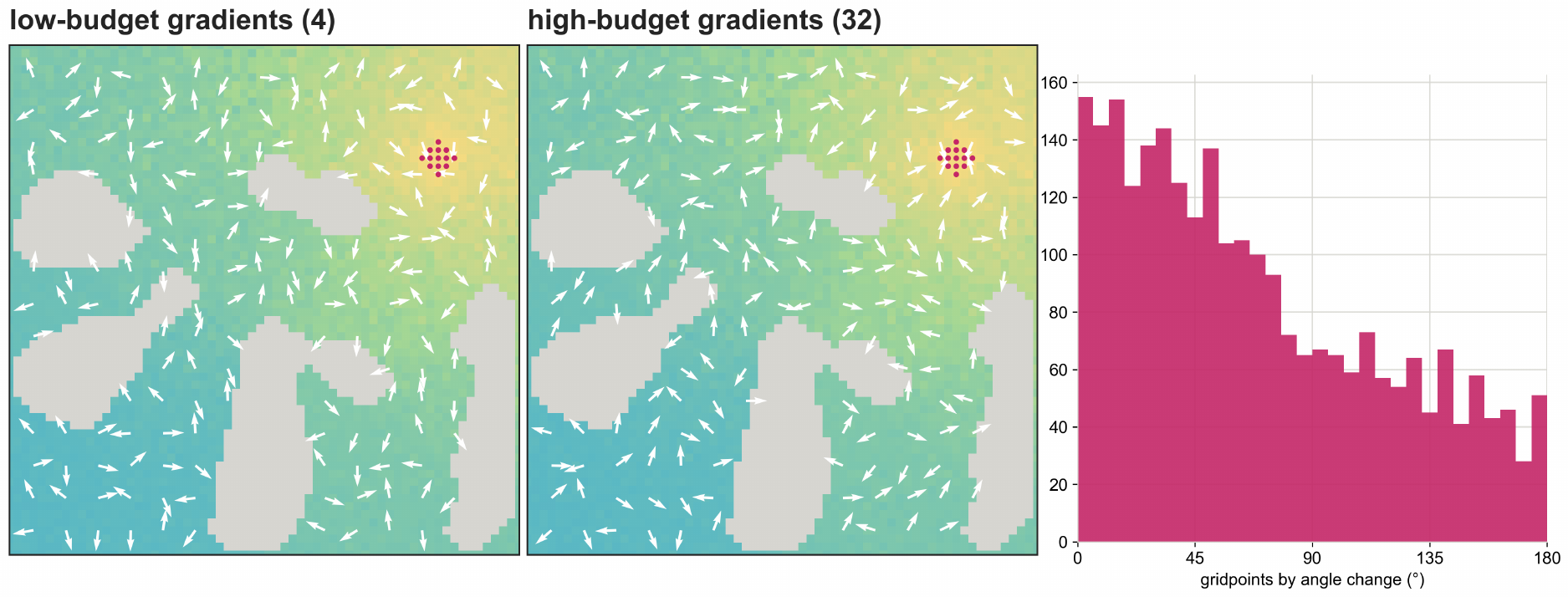}
\end{minipage}
\caption{WoSt walk and label statistics: (a) walk-length variability and boundary treatment; (b) label and gradient variability on a fixed map, with differences measured against a finite Monte Carlo reference; (c) spatial variation in mean walk length.}
\label{fig:walk-variability}
\end{figure}

\FloatBarrier
\subsection{Local Search Example}
\label{app:local-search-example}

\begin{figure}[!htb]
\centering
\includegraphics[width=\linewidth]{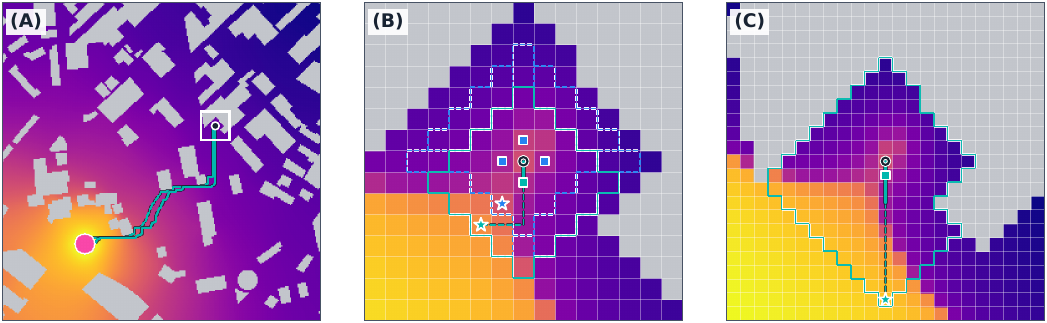}\par
\RaggedRight
\caption{Local search on one fixed urban-map field: (A) full map and routes; (B,C) enlarged neighborhoods for local search (4) and (10). Greater depth exposes endpoints farther along the passage. Gray cells are obstacles; overlays mark candidate regions and routes. Table~\ref{tab:main-planning} reports aggregate controller success.}
\label{fig:local-search-cells}
\end{figure}

\FloatBarrier

\FloatBarrier

\FloatBarrier

\end{document}